\documentclass{article}
\usepackage[nonatbib,final]{neurips_2020}
\usepackage[utf8]{inputenc} 
\usepackage[T1]{fontenc}    
\usepackage{hyperref}       
\usepackage{url}            
\usepackage{booktabs}       
\usepackage{amsfonts}       
\usepackage{nicefrac}       
\usepackage{microtype}      
\usepackage{bm}
\usepackage{algorithm}
\usepackage{multirow}
\usepackage{times}
\usepackage{epsfig}
\usepackage{amsmath} 
\usepackage{amssymb}
\usepackage{subfigure}
\usepackage{diagbox}
\usepackage{makecell}
\usepackage{colortbl}
\usepackage{threeparttable}
\usepackage{wrapfig}
\usepackage[dvipsnames]{xcolor}

\usepackage[square,numbers,sort&compress]{natbib}
\hypersetup{
    colorlinks,
    citecolor=RoyalBlue,
}

\usepackage{tikz}
\usepackage{calc}

\usepackage{amsmath,amssymb,amsthm,array,amsfonts,mathtools}

\usepackage{algpseudocode}  
\usepackage{amsmath}
\usepackage{verbatim}

\newtheorem{theorem}{Theorem}

\newtheorem{corollary}[theorem]{Corollary}

\title{Deep Multimodal Fusion by Channel Exchanging}

\author{Yikai Wang$^1$, Wenbing Huang$^1$, Fuchun Sun$^1$\thanks{Corresponding author: Fuchun Sun.}$\;$,
Tingyang Xu$^2$, Yu Rong$^2$, Junzhou Huang$^2$
\\$^1$Beijing National Research Center for Information Science and Technology$\,$(BNRist),\\ State Key Lab on Intelligent Technology and Systems,\\ Department of Computer Science and Technology, Tsinghua University $\;^2$Tencent AI Lab\\ \texttt{\footnotesize wangyk17@mails.tsinghua.edu.cn, hwenbing@126.com, fcsun@tsinghua.edu.cn,}\\\texttt{\footnotesize tingyangxu@tencent.com, yu.rong@hotmail.com, jzhuang@uta.edu}}

\begin{document}
\maketitle

\begin{abstract}

Deep multimodal fusion by using multiple sources of data for classification or regression has exhibited a clear advantage over the unimodal counterpart on various applications. Yet, current methods including aggregation-based and alignment-based fusion are still inadequate in balancing the trade-off between inter-modal fusion and intra-modal processing, incurring a bottleneck of performance improvement. To this end, this paper proposes Channel-Exchanging-Network (CEN), a parameter-free multimodal fusion framework that dynamically exchanges channels between sub-networks of different modalities. Specifically, the channel exchanging process is self-guided by individual channel importance that is measured by the magnitude of Batch-Normalization (BN) scaling factor during training. The validity of such exchanging process is also guaranteed by sharing convolutional filters yet keeping separate BN layers across modalities, which, as an add-on benefit, allows our multimodal architecture to be almost as compact as a unimodal network. Extensive experiments on semantic segmentation via RGB-D data and image translation through multi-domain input verify the effectiveness of our CEN compared to current state-of-the-art methods. Detailed ablation studies have also been carried out, which provably affirm the advantage of each component we propose. Our code is available at \url{https://github.com/yikaiw/CEN}.

\end{abstract}

\section{Introduction}

Encouraged by the growing availability of low-cost sensors, \emph{multimodal fusion} that takes advantage of data obtained from different sources/structures for classification or regression has become a central problem in machine learning~\cite{journals/pami/BaltrusaitisAM19}. Joining the success of deep learning, multimodal fusion is recently specified as \emph{deep multimodal fusion} by introducing end-to-end neural integration of multiple modalities~\cite{ramachandram2017deep}, and it has exhibited remarkable benefits 
against the unimodal paradigm in semantic segmentation~\cite{conf/iccv/LeePH17,journals/ijcv/ValadaMB20}, action recognition~\cite{fan2018end,conf/eccv/GarciaMM18,journals/tip/SongLLG20}, visual question answering~\cite{conf/iccv/AntolALMBZP15,conf/nips/IlievskiF17}, and many others~\cite{conf/iccv/BalntasDSSKK17,conf/iclr/JinYBJ19,conf/iccv/ZhangZSWSL19}.

A variety of works have been done towards deep multimodal fusion~\cite{ramachandram2017deep}. Regarding the type of how they fuse, existing methods are generally categorized into \emph{aggregation-based} fusion, \emph{alignment-based} fusion, and the mixture of them~\cite{journals/pami/BaltrusaitisAM19}. The aggregation-based methods employ a certain operation (\emph{e.g.} averaging \cite{conf/accv/HazirbasMDC16}, concatenation~\cite{conf/icml/NgiamKKNLN11,conf/cvpr/ZengTHYSCW19}, and self-attention~\cite{journals/ijcv/ValadaMB20}) to combine multimodal sub-networks into a single network. The alignment-based fusion~\cite{conf/cvpr/ChengCLZH17,journals/tip/SongLLG20,conf/eccv/WangWTSW16}, instead, adopts a regulation loss to align the embedding of all sub-networks while keeping full propagation for each of them. The difference between such two mechanisms is depicted in Figure~\ref{pic:sketch}. Another categorization of multimodal fusion can be specified as early, middle, and late fusion, depending on when to fuse, which have been discussed in earlier works~\cite{atrey2010multimodal,bruni2014multimodal,hall1997introduction,snoek2005early} and also in the deep learning literature~\cite{journals/pami/BaltrusaitisAM19,kiela2017deep,lazaridou2014wampimuk,conf/nips/VriesSMLPC17}.

Albeit the fruitful progress, it remains a great challenge on how to integrate the common information across modalities, meanwhile preserving the specific patterns of each one. In particular, the aggregation-based fusion is prone to underestimating the intra-modal propagation once the multimodal sub-networks have been aggregated. On the contrary, the alignment-based fusion maintains the intra-modal propagation, but it always delivers ineffective inter-modal fusion owing to the weak message exchanging by solely training the alignment loss. To balance between inter-modal fusion and intra-modal processing, current methods usually resort to careful hierarchical combination of the aggregation and alignment fusion for enhanced performance, at a cost of extra computation and engineering overhead~\cite{conf/cvpr/DuWWZW19,conf/iccv/LeePH17,conf/cvpr/ZengTHYSCW19}. 

\textbf{Present Work.}
We propose Channel-Exchanging-Network (CEN) which is parameter-free, adaptive, and effective. Instead of using aggregation or alignment as before, CEN dynamically exchanges the channels between sub-networks for fusion (see Figure~\ref{pic:sketch}(c)). The core of CEN lies in its smaller-norm-less-informative assumption inspired from network pruning~\cite{conf/iccv/LiuLSHYZ17,conf/iclr/YeL0W18}. To be specific, we utilize the scaling factor (\emph{i.e.} $\gamma$) of Batch-Normalization (BN)~\cite{conf/icml/IoffeS15} as the importance measurement of each corresponding channel, and replace the channels associated with close-to-zero factors of each modality with the mean of other modalities. Such message exchanging is parameter-free and self-adaptive, as it is dynamically controlled by the scaling factors that are determined by the training itself. 
Besides, we only allow directed channel exchanging within a certain range of channels in each modality to preserve intra-modal processing. More details are provided in \textsection~\ref{subsec:channel_exc}.
Necessary theories on the validity of our idea are also presented in \textsection~\ref{subsec_anslysis}.

Another hallmark of CEN is that the parameters except BN layers of all sub-networks are shared with each other (\textsection~\ref{subsec:net_sharing}). Although this idea is previously studied in~\cite{conf/cvpr/ChangYSKH19,wang2020asymfusion}, we apply it here to serve specific purposes in CEN: by using private BNs, as already discussed above, we can determine the channel importance for each individual modality;
by sharing convolutional filters, the corresponding channels among different modalities are embedded with the same mapping, thus more capable of modeling the modality-common statistic. This design further compacts the multimodal architecture to be almost as small as the unimodal one.

We evaluate our CEN on two studies: semantic segmentation via RGB-D data~\cite{conf/eccv/SilbermanHKF12,conf/cvpr/SongLX15} and image translation through multi-domain input~\cite{conf/cvpr/ZamirSSGMS18}. 
It demonstrates that CEN yields remarkably superior performance than various kinds of fusion methods based on aggregation or alignment under a fair condition of comparison. In terms of semantic segmentation particularly, our CEN significantly outperforms state-of-the-art methods on two popular benchmarks. We also conduct ablation studies to isolate the benefit of each proposed component. More specifications are provided in \textsection~\ref{sec:experiments}.

\begin{figure}[t]
\centering
\includegraphics[scale=0.27]{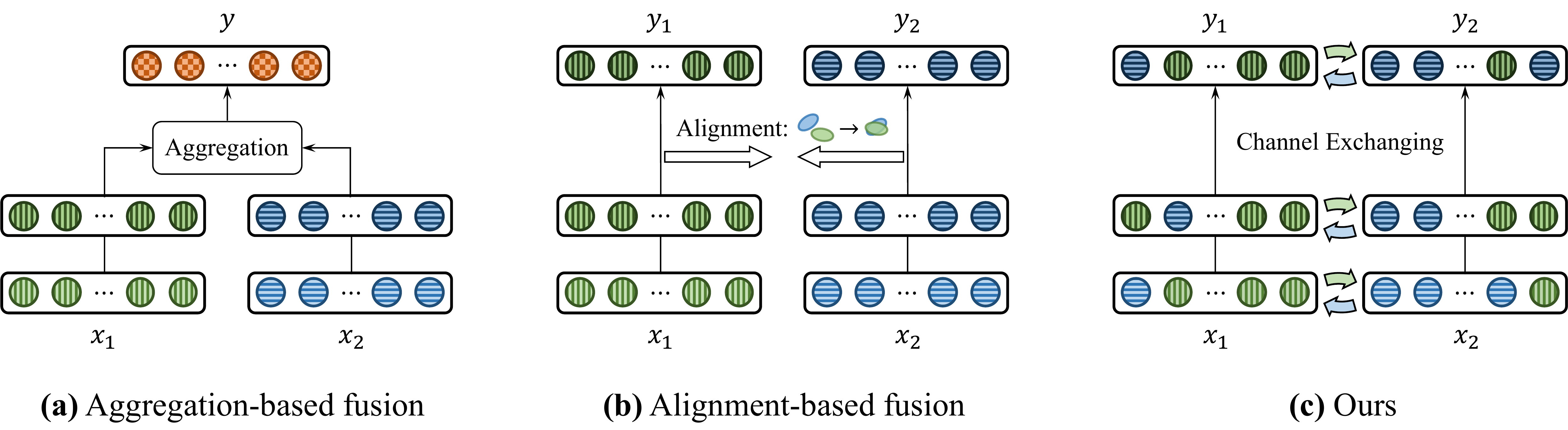}
\caption{A sketched comparison between existing fusion methods and ours.}
\label{pic:sketch}
\end{figure}

\section{Related Work}
We introduce the methods of deep multimodal fusion, and the concepts related to our paper.

\textbf{Deep multimodal fusion.}
As discussed in introduction, deep multimodal fusion methods can be mainly categorized into aggregation-based fusion and alignment-based fusion~\cite{journals/pami/BaltrusaitisAM19}.
Due to the weakness in intra-modal processing, recent aggregation-based works perform feature fusion while still maintaining the sub-networks of all modalities~\cite{conf/cvpr/DuWWZW19,conf/iccv/LinCCHH17}. 
Besides, \cite{conf/accv/HazirbasMDC16} points out the performance by fusion is highly affected by the choice of which layer to fuse. Alignment-based fusion methods align multimodal features by applying the similarity regulation, where Maximum-Mean-Discrepancy (MMD)~\cite{journals/jmlr/GrettonBRSS12} is usually adopted for the measurement.
However, simply focusing on unifying the whole distribution may overlook the specific patterns in each domain/modality \cite{conf/nips/BousmalisTSKE16,journals/tip/SongLLG20}. Hence, \cite{conf/eccv/WangWTSW16} provides a way that may alleviate this issue, which correlates modality-common features while simultaneously maintaining modality-specific information. 
There is also a portion of the multimodal learning literature based on modulation~\cite{de2017guesswhat,dumoulin2018feature,conf/nips/VriesSMLPC17}. Different from these types of fusion methods, we propose a new fusion method by channel exchanging, which potentially enjoys the guarantee to both sufficient inter-model interactions and intra-modal learning. 

\textbf{Other related concepts.} 
The idea of using BN scaling factor to evaluate the importance of CNN channels has been studied in network pruning~\cite{conf/iccv/LiuLSHYZ17,conf/iclr/YeL0W18} and representation learning~\cite{shao2020channel}. 
Moreover, \cite{conf/iccv/LiuLSHYZ17} enforces $\ell_1$ norm penalty on the scaling factors and explicitly prunes out filters meeting a sparsity criteria. Here, we apply this idea as an adaptive tool to determine where to exchange and fuse.  CBN~\cite{conf/nips/VriesSMLPC17} performs cross-modal message passing by modulating BN of one modality conditional on the other, which is clearly different from our method that directly exchanges channels between different modalities for fusion. 
ShuffleNet~\cite{conf/cvpr/ZhangZLS18} proposes to shuffle a portion of channels among multiple groups for efficient propagation in light-weight networks, which is similar to our idea of exchanging channels for message fusion. Yet, while the motivation of our paper is highly different, the exchanging process is self-determined by the BN scaling factors, instead of the random exchanging in ShuffleNet. 

\section{Channel Exchanging Networks}
In this section, we introduce our CEN, by mainly specifying its two fundamental components: the channel exchanging process and the sub-network sharing mechanism, followed by necessary analyses.

\subsection{Problem Definition}
Suppose we have the $i$-th input data of $M$ modalities, $\bm{x}^{(i)}=\{\bm{x}_m^{(i)}\in\mathbb{R}^{C\times (H\times W)}\}_{m=1}^M$, where $C$ denotes the number of channels, $H$ and $W$ denote the height and width of the feature map\footnote{Although our paper is specifically interested in image data, our method is still general to other domains; for example, we can set $H=W=1$ for vectors.}. We define $N$ as the batch-size.
The goal of deep multimodal fusion is to determine a multi-layer network $f(\bm{x}^{(i)})$ (particularly CNN in this paper) whose output $\hat{\bm{y}}^{(i)}$ is expected to fit the target $\bm{y}^{(i)}$ as much as possible. This can be implemented by minimizing the empirical loss as
\begin{equation}
\label{eq:loss}
    \min_{f}\frac{1}{N}\sum_{i=1}^{N}\mathcal{L}\left(\hat{\bm{y}}^{(i)}=f(\bm{x}^{(i)}),\bm{y}^{(i)}\right).
\end{equation}

We now introduce two typical kinds of instantiations to Equation~\ref{eq:loss}:

\textbf{I.}
The aggregation-based fusion first processes each $m$-th modality with a separate sub-network $f_m$ and then combine all their outputs via an aggregation operation followed by a global mapping. In formal, it computes the output by
\begin{equation}
    \hat{\bm{y}}^{(i)}=f(\bm{x}^{(i)})=h(\text{Agg}(f_1(\bm{x}_1^{(i)}),\cdots,f_M(\bm{x}_M^{(i)}))),
\end{equation}
where $h$ is the global network and $\text{Agg}$ is the aggregation function. The aggregation can be implemented as averaging~\cite{conf/accv/HazirbasMDC16}, concatenation~\cite{conf/cvpr/ZengTHYSCW19}, and self-attention~\cite{journals/ijcv/ValadaMB20}. All networks are optimized via minimizing Equation~\ref{eq:loss}.  

\textbf{II.}
The alignment-based fusion leverages an alignment loss for capturing the inter-modal concordance while keeping the outputs of all sub-networks $f_m$. Formally, it solves
\begin{equation}
\label{eq:alig}
    \min_{f_{1:M}} \frac{1}{N}\sum_{i=1}^{N} \mathcal{L}\left(\sum_{m=1}^M\alpha_mf_m(\bm{x}_m^{(i)}), \bm{y}^{(i)}\right)+ \text{Alig}_{f_{1:M}}(\bm{x}^{(i)}), \quad s.t. \sum_{m=1}^M\alpha_m=1,
\end{equation}
where the alignment $\text{Alig}_{f_{1:M}}$ is usually specified as Maximum-Mean-Discrepancy (MMD)~\cite{journals/jmlr/GrettonBRSS12} between certain hidden features of sub-networks, and the final output $\sum_{m=1}^M\alpha_mf_m(\bm{x}_m^{(i)})$ is an ensemble of $f_m$ associated with the decision score $\alpha_m$ which is learnt by an additional softmax output to meet the simplex constraint. 

As already discussed in introduction, both fusion methods are insufficient to determine the trade-off between fusing modality-common information and preserving modality-specific patterns. In contrast, our CEN is able to combine their best, the details of which are clarified in the next sub-section. 

\subsection{Overall Framework}
The whole optimization objective of our method is
\begin{equation}
\label{eq:cen}
    \min_{f_{1:M}} \frac{1}{N}\sum_{i=1}^{N} \mathcal{L}\left(\sum_{m=1}^M\alpha_mf_m(\bm{x}^{(i)}), \bm{y}^{(i)}\right)+ \lambda\sum_{m=1}^{M}\sum_{l=1}^L|\bm{\hat{\gamma}}_{m,l}|, \quad s.t. \sum_{m=1}^M\alpha_m=1,
\end{equation}
where,
\vspace{-0.3em}
\begin{itemize}
    \item The sub-network $f_m(\bm{x}^{(i)})$ (opposed to $f_m(\bm{x}_m^{(i)})$ in Equation~\ref{eq:alig} of the alignment fusion) fuses multimodal information by channel exchanging, as we will detail in \textsection~\ref{subsec:channel_exc};
    \item Each sub-network is equipped with BN layers containing the scaling factors $\bm{\gamma}_{m,l}$ for the $l$-th layer, and we will penalize the  
    $\ell_1$ norm of their certain portion $\bm{\hat{\gamma}}_{m,l}$ for sparsity, which is presented in \textsection~\ref{subsec:channel_exc};
    \item The sub-network $f_m$ shares the same parameters except BN layers to facilitate the channel exchanging as well as to compact the architecture further, as introduced in \textsection~\ref{subsec:net_sharing};
    \item The decision scores of the ensemble output, $\alpha_m$, are trained by a softmax output similar to the alignment-based methods.  
\end{itemize}

By the design of Equation~\ref{eq:cen}, we conduct a parameter-free message fusion across modalities while maintaining the self-propagation of each sub-network so as to characterize the specific statistic of each modality. Moreover, our fusion of channel exchanging is self-adaptive and easily embedded to everywhere of the sub-networks, with the details given in what follows.

\begin{figure}[t]
\centering
\includegraphics[scale=0.3]{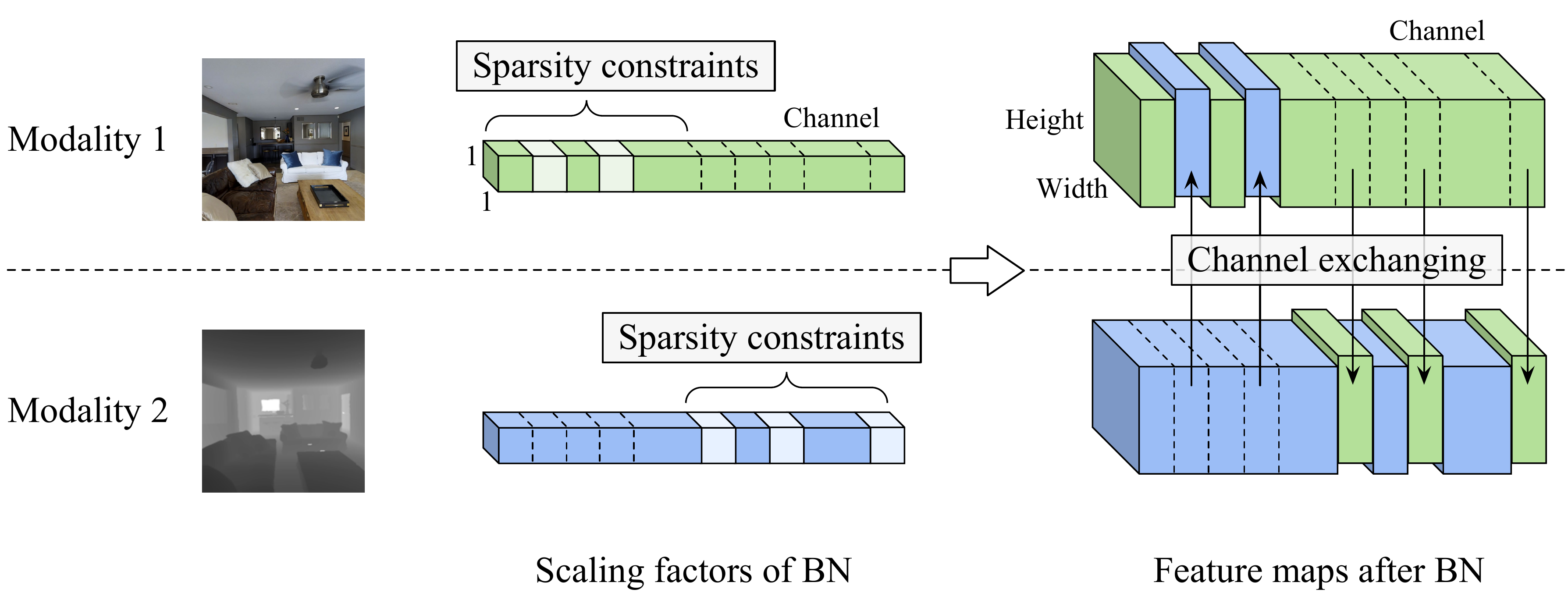}
\vskip -0.05 in
\caption{An illustration of our multimodal fusion strategy. The sparsity constraints on scaling factors are applied to disjoint regions of different modalities. A feature map will be replaced by that of other modalities at the same position, if its scaling factor is lower than a threshold. }
\label{channel}
\vskip -0.1 in
\end{figure}

\subsection{Channel Exchanging by Comparing BN Scaling Factor}
\label{subsec:channel_exc}
Prior to introducing the channel exchanging process, we first review the BN layer~\cite{conf/icml/IoffeS15}, which is used widely in deep learning to eliminate covariate shift and improve generalization. We denote by $\bm{x}_{m,l}$ the $l$-th layer feature maps of the $m$-th sub-network, and by $\bm{x}_{m,l,c}$ the $c$-th channel. The BN layer performs a  normalization of $\bm{x}_{m,l}$ followed by an affine transformation, namely,
\begin{equation}
\label{eq:bn}
\bm{x}'_{m,l,c} = \gamma_{m,l,c}\frac{\bm{x}_{m,l,c}-\mu_{m,l,c}}{\sqrt{\sigma^2_{m,l,c}+\epsilon}}+\beta_{m,l,c},
\end{equation}
where, $\mu_{m,l,c}$ and $\sigma_{m,l,c}$ compute the mean and the standard deviation, respectively, of all activations over all pixel locations ($H$ and $W$) for the current mini-batch data; $\gamma_{m,l,c}$ and $\beta_{m,l,c}$ are the trainable scaling factor and offset, respectively; $\epsilon$ is a small constant to avoid divisions by zero. The $(l+1)$-th layer takes $\{\bm{x}'_{m,l,c}\}_c$ as input after a non-linear function.

The factor $\gamma_{m,l,c}$ in Equation~\ref{eq:bn} evaluates the correlation between the input $\bm{x}_{m,l,c}$ and the output $\bm{x}'_{m,l,c}$ during training. The gradient of the loss \emph{w.r.t.} $\bm{x}_{m,l,c}$ will approach 0 if $\gamma_{m,l,c}\rightarrow 0$, implying that $\bm{x}_{m,l,c}$ will lose its influence to the final prediction and become redundant thereby. Moreover, we will prove in \textsection~\ref{subsec_anslysis} that the state of $\gamma_{m,l,c}=0$ is attractive with a high probability, given the $\ell_1$ norm regulation in Equation~\ref{eq:cen}. In other words, once the current channel $\bm{x}_{m,l,c}$ becomes redundant due to $\gamma_{m,l,c}\rightarrow 0$ at a certain training step, it will almost do henceforth. 

It thus motivates us to replace the channels of small scaling factors with the ones of other sub-networks, since those channels potentially are redundant. To do so, we derive
\begin{equation}
\label{eq:exchange-bn}
\bm{x}'_{m,l,c} =
\begin{cases}
\gamma_{m,l,c}\frac{\bm{x}_{m,l,c}-\mu_{m,l,c}}{\sqrt{\sigma^2_{m,l,c}+\epsilon}}+\beta_{m,l,c}, &\text{if}\quad \gamma_{m,l,c}>\theta; \\
\frac{1}{M-1}\sum\limits_{m'\neq m}^M\gamma_{m',l,c}\frac{\bm{x}_{m',l,c}-\mu_{m',l,c}}{\sqrt{\sigma^2_{m',l,c}+\epsilon}}+\beta_{m',l,c}, &\text{else};
\end{cases}
\end{equation}
where, the current channel is replaced with the mean of other channels if its scaling factor is smaller than a certain threshold $\theta\approx0^+$. In a nutshell, if one channel of one modality has little impact to the final prediction, then we replace it with the mean of other modalities. We apply Equation~\ref{eq:exchange-bn} for each modality before feeding them into the nonlinear activation followed by the convolutions in the next layer. Gradients are detached from the replaced channel and back-propagated through the new ones.
    
In our implementation, we divide the whole channels into $M$ equal sub-parts, and only perform the channel exchanging in each different sub-part for different modality. We denote the scaling factors that are allowed to be replaced as $\bm{\hat{\gamma}}_{m,l}$. We further impose the sparsity constraint on $\bm{\hat{\gamma}}_{m,l}$ in Equation~\ref{eq:cen} to discover unnecessary channels. As the exchanging in Equation~\ref{eq:exchange-bn} is a directed process within only one sub-part of channels, it hopefully can not only retain modal-specific propagation in the other $M-1$ sub-parts but also avoid unavailing exchanging since $\gamma_{m',l,c}$, different from $\hat{\gamma}_{m,l,c}$, is out of the sparsity constraint. Figure~\ref{channel} illustrates our channel exchanging process.

\subsection{Sub-Network Sharing with Independent BN}
\label{subsec:net_sharing}
It is known in~\cite{conf/cvpr/ChangYSKH19,wang2020asymfusion} that leveraging private BN layers is able to characterize the traits of different domains or modalities. In our method, specifically, different scaling factors (Equation~\ref{eq:bn}) evaluate the importance of the channels of different modalities, and they should be decoupled.

With the exception of BN layers, all sub-networks $f_m$ share all parameters with each other including convolutional filters\footnote{If the input channels of different modalities are different (\emph{e.g.} RGB and depth), we will broaden their sizes to be the same as their Least Common Multiple (LCM).}. The hope is that we can further reduce the network complexity and therefore improve the predictive generalization. Rather, considering the specific design of our framework, sharing convolutional filters is able to capture the common patterns in different modalities, which is a crucial purpose of multimodal fusion. In our experiments, we conduct multimodal fusion on RGB-D images or on other domains of images corresponding to the same image content. In this scenario, all modalities are homogeneous in the sense that they are just different views of the same input. Thus, sharing parameters between different sub-networks still yields promisingly expressive power. Nevertheless, when we are dealing with heterogeneous modalities (\emph{e.g.} images with text sequences), it would impede the expressive power of the sub-networks if keeping sharing their parameters, hence a more dexterous mechanism is suggested, the discussion of which is left for future exploration.

\subsection{Analysis}
\label{subsec_anslysis}
\begin{theorem}
\label{th:theorem}
Suppose $\{\gamma_{m,l,c}\}_{m,l,c}$ are the BN scaling factors of any multimodal fusion network (without channel exchanging) optimized by Equation~\ref{eq:cen}. Then the probability of $\gamma_{m,l,c}$ being attracted to $\gamma_{m,l,c}=0$  during training (\emph{a.k.a.} $\gamma_{m,l,c}=0$ is the local minimum) is equal to $2\Phi(\lambda|\frac{\partial L}{\partial \bm{x}'_{m,l,c}}|^{-1})-1$, where $\Phi$ derives the cumulative probability of standard Gaussian.
\end{theorem}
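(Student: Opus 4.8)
The plan is to turn the informal phrase ``$\gamma_{m,l,c}$ being attracted to $\gamma_{m,l,c}=0$'' into the precise variational statement that $\gamma_{m,l,c}=0$ is a local minimizer of the objective in Equation~\ref{eq:cen} along the $\gamma_{m,l,c}$ coordinate, and then to evaluate the probability of that event over the mini-batch randomness. Fix the indices and write $\gamma$ for $\gamma_{m,l,c}$; restricted to this one coordinate the objective has the form $g(\gamma)=L(\gamma)+\lambda|\gamma|$, where $L$ collects the (smooth) task loss and all the other $\ell_1$ terms, and we may use the plain BN of Equation~\ref{eq:bn} because the theorem assumes no channel exchanging. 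By the subgradient optimality condition for a smooth-plus-$|\cdot|$ function, $\gamma=0$ is a local minimum iff $0\in\partial g(0)$, i.e. iff $\big|\tfrac{\partial L}{\partial\gamma}\big|_{\gamma=0}\le\lambda$. This is also exactly the condition under which the (sub)gradient-flow update started at $\gamma=0^{\pm}$ points back toward the origin, so the two readings of ``attracted'' coincide, and the theorem reduces to computing $\Pr\!\big(|\partial L/\partial\gamma|\le\lambda\big)$.

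Next I would expand $\partial L/\partial\gamma$ through the BN parameterization. Writing $\hat{\bm{x}}_{m,l,c}=(\bm{x}_{m,l,c}-\mu_{m,l,c})/\sqrt{\sigma^2_{m,l,c}+\epsilon}$ for the normalized activation, Equation~\ref{eq:bn} gives $\bm{x}'_{m,l,c}=\gamma\,\hat{\bm{x}}_{m,l,c}+\beta_{m,l,c}$, hence by the chain rule $\frac{\partial L}{\partial\gamma}=\big\langle \frac{\partial L}{\partial\bm{x}'_{m,l,c}},\,\hat{\bm{x}}_{m,l,c}\big\rangle$, the inner product running over the $N\times H\times W$ entries of the current mini-batch. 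The key structural fact is that, by construction, $\hat{\bm{x}}_{m,l,c}$ has (empirical) mean zero and variance one; modeling its entries as i.i.d.\ standard Gaussian and independent of the backpropagated signal $\frac{\partial L}{\partial\bm{x}'_{m,l,c}}$, the inner product is, conditionally on that signal, a zero-mean Gaussian with variance $\big\|\frac{\partial L}{\partial\bm{x}'_{m,l,c}}\big\|^2$. Setting $s:=\big|\frac{\partial L}{\partial\bm{x}'_{m,l,c}}\big|$ (the Euclidean norm of the per-channel gradient tensor), we get $\partial L/\partial\gamma\sim\mathcal N(0,s^2)$.

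The argument then closes with an elementary Gaussian tail identity: for $Z\sim\mathcal N(0,1)$ one has $\Pr(|Z|\le t)=\Phi(t)-\Phi(-t)=2\Phi(t)-1$, so taking $t=\lambda/s$ yields $\Pr(|\partial L/\partial\gamma|\le\lambda)=2\Phi(\lambda s^{-1})-1=2\Phi\big(\lambda\,\big|\tfrac{\partial L}{\partial\bm{x}'_{m,l,c}}\big|^{-1}\big)-1$, which is the claimed probability. Note this is monotonically increasing in $\lambda$ and decreasing in $s$, matching the intuition that stronger $\ell_1$ pressure or a weaker downstream gradient makes the channel more likely to be driven to redundancy.

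The main obstacle is the distributional modeling in the second step: rigorously justifying that $\hat{\bm{x}}_{m,l,c}$ behaves like a standard Gaussian vector \emph{and} is independent of $\frac{\partial L}{\partial\bm{x}'_{m,l,c}}$. The zero-mean/unit-variance property is exact from the definition of BN, but Gaussianity is only approximate (it would require a CLT across spatial/batch locations together with a ``well-mixed features'' assumption), and independence from the backward signal is an idealization rather than a theorem. I would therefore present these as explicit modeling assumptions, note that they are the standard ones in BN-based pruning analyses, and remark that the i.i.d.\ hypothesis can be weakened to a Lindeberg-type condition at the cost of replacing the exact Gaussian law by an asymptotic one; everything else (the subgradient characterization of the local minimum and the $2\Phi(t)-1$ identity) is exact.
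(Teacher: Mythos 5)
Your proposal is correct and follows essentially the same route as the paper's own proof: both characterize attraction to $\gamma_{m,l,c}=0$ via the one-sided (sub)gradient condition $\bigl|\bigl\langle \tfrac{\partial L}{\partial \bm{x}'_{m,l,c}},\,\hat{\bm{x}}_{m,l,c}\bigr\rangle\bigr|\le\lambda$ obtained from the $\ell_1$-regularized objective, invoke a CLT/Gaussian model for the normalized activation to treat this inner product as $\mathcal{N}(0,s^2)$ with $s=\bigl|\tfrac{\partial L}{\partial \bm{x}'_{m,l,c}}\bigr|$, and conclude with the identity $\Pr(|Z|\le t)=2\Phi(t)-1$. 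Your write-up is merely more explicit than the paper about the modeling assumptions (Gaussianity of $\hat{\bm{x}}_{m,l,c}$, its independence from the backpropagated signal, and reading $|\cdot|$ as the norm of the per-channel gradient), which the paper leaves implicit in its appeal to the central limit theorem.
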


In practice, especially when approaching the convergence point, the magnitude of $\frac{\partial L}{\partial \bm{x}'_{m,l,c}}$ is usually very close to zero, indicating that the probability of staying around $\gamma_{m,l,c}=0$ is large. In other words, when the scaling factor of one channel is equal to zero, this channel will almost become redundant during later training process, which will be verified by our experiment in the appendix. Therefore, replacing the channels of $\gamma_{m,l,c}=0$ with other channels (or anything else) will only enhance the trainablity of the model. We immediately have the following corollary,
\setcounter{theorem}{0}
\begin{corollary}
\label{th:corollary}
\vspace{-0.1em}
If the minimal of Equation~\ref{eq:cen} implies $\gamma_{m,l,c}=0$, then the channel exchanging by Equation~\ref{eq:exchange-bn} (assumed no crossmodal parameter sharing) will only decrease the training loss, \emph{i.e.} $\min_{f'_{1:M}}L\leq\min_{f_{1:M}}L$, given the sufficiently expressive $f'_{1:M}$ and $f_{1:M}$ which denote the cases with and without channel exchanging, respectively.
\end{corollary}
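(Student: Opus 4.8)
The plan is to show that the class of channel-exchanging networks can always \emph{simulate} an optimal non-exchanging network, so exchanging never raises the optimum. Concretely, I would start from any minimizer $f_{1:M}^\star$ of Equation~\ref{eq:cen} without exchanging, build a particular exchanging network $\tilde f_{1:M}'$ that computes exactly the same input--output map and carries exactly the same scaling factors, and then observe that the objective in Equation~\ref{eq:cen} depends on a network only through that map and through $\{\bm{\hat\gamma}_{m,l}\}$. Hence $\min_{f'_{1:M}} L \le L(\tilde f'_{1:M}) = L(f_{1:M}^\star) = \min_{f_{1:M}} L$, which is the claim. Intuitively: the exchanging model is free to \emph{ignore} whatever is written into the swapped channels, so exchanging can only help.

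First I would invoke the hypothesis: at $f_{1:M}^\star$ every scaling factor governed by the sparsity term vanishes, i.e. $\gamma_{m,l,c}=0$ for each channel $c$ in the exchange sub-part of layer $l$. By Equation~\ref{eq:bn} the corresponding post-BN channel is then the constant, data-independent map $\beta_{m,l,c}\bm{1}$, and after the non-linearity $\phi$ it is still a constant map $\kappa_{m,l,c}\bm{1}$ with $\kappa_{m,l,c}=\phi(\beta_{m,l,c})$. A constant input channel contributes to the following convolution only an additive, spatially constant term at each output location, i.e. it acts purely as a bias shift of that layer.

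Next I would construct $\tilde f_{1:M}'$: copy all parameters of $f_{1:M}^\star$; keep $\gamma_{m,l,c}=0$ for the exchange channels (so the $\ell_1$ penalty in Equation~\ref{eq:cen} is unchanged, and since $\theta\approx 0^+>0$ the second branch of Equation~\ref{eq:exchange-bn} is indeed triggered for these channels); but in the layer-$(l{+}1)$ convolution of modality $m$ zero out the weights acting on input channel $c$ and fold the lost constant $\kappa_{m,l,c}\sum_{i,j}(W_{m,l+1})_{i,j,c}$ into that layer's bias. Because we assume no crossmodal parameter sharing, this edit to modality $m$ leaves the other sub-networks untouched. Then the layer-$(l{+}1)$ pre-activations of $\tilde f_{1:M}'$ equal those of $f_{1:M}^\star$ \emph{regardless} of the value that Equation~\ref{eq:exchange-bn} writes into channel $c$; doing this at every exchanging layer (the edits live in disjoint weight slices, so they combine without interference), $\tilde f_{1:M}'$ realizes the same function as $f_{1:M}^\star$ and therefore the same loss, giving the inequality.

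I expect the one genuinely delicate point to be boundary effects of the convolution: convolving a constant channel with a filter is exactly constant only in the interior, so under zero padding the border output is not literally a bias shift. This is precisely where the ``sufficiently expressive'' qualifier does its work --- we either absorb the residual border contribution using the extra capacity of $\tilde f_{1:M}'$ (and of $f_{1:M}^\star$), or read the statement at the level of function approximation. The remaining steps --- checking that the exchange branch fires at $\gamma_{m,l,c}=0$, that the penalty term and all parameters outside the single edited convolution are untouched, and the layerwise combination of edits --- are routine bookkeeping.
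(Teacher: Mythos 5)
Your proposal is correct and is essentially the paper's own argument: the paper likewise shows that when $\gamma_{m,l,c}=0$ the contribution of that channel to the next layer is the constant $\bm{W}_{m,l+1,c}\otimes\sigma(\beta_{m,l,c})+b_{m,l+1}$, and constructs the exchanging network by setting $\bm{W}'_{m,l+1,c}=0$ and folding this constant into the bias, so that $f'_{1:M}=f_{1:M}$ and hence $\min_{f'_{1:M}}L\le\min_{f_{1:M}}L$. Your extra remark about zero-padding boundary effects is a finer point the paper glosses over (it folds the entire constant feature map into the ``bias''), but it does not change the substance of the argument.
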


\section{Experiments}
\label{sec:experiments}
We contrast the performance of CEN against existing multimodal fusion methods on two different tasks: semantic segmentation and image-to-image translation. The frameworks for both tasks are in the encoder-decoder style. Note that we only perform multimodal fusion within the encoders of different modalities throughout the experiments. Our codes are complied on PyTorch~\cite{conf/nips/PaszkeGMLBCKLGA19}.

\begin{table}[t]
\centering

\caption{Detailed results for different versions of our CEN on NYUDv2. All results are obtained with the backbone RefineNet (ResNet101) of single-scale evaluation for test.}
\vspace{-0.3em}
\resizebox{115mm}{!}{
\begin{tabular}{llll|p{1.2cm}<{\centering}p{1.2cm}<{\centering}p{1.4cm}<{\centering}}
\toprule
\multirow{2}*{Convs}&\multirow{2}*{\makecell[l]{BNs}}&\multirow{2}*{$\ell_1$ Regulation}&\multirow{2}*{Exchange}&\multicolumn{3}{c}{Mean IoU (\%)}\\
&&&&RGB & Depth &Ensemble$\,$\\

\midrule
Unshared&Unshared&$\;\;\;\;\;\;\;\;\times$&$\;\;\;\;\;\,\times$&45.5&35.8&47.6\\
Shared & Shared&$\;\;\;\;\;\;\;\;\times$&$\;\;\;\;\;\,\times$&{43.7}&{35.5}&{45.2}\\
Shared&Unshared&$\;\;\;\;\;\;\;\;\times$&$\;\;\;\;\;\,\times$&46.2&38.4&48.0\\
\midrule
Shared &Unshared  & Half-channel&$\;\;\;\;\;\,\times$&46.0&38.1&47.7\\
Shared &Unshared  & Half-channel &$\;\;\;\;\;\,\checkmark$&\textbf{49.7}&\textbf{45.1}&\textbf{51.1}\\
Shared &Unshared &All-channel&$\;\;\;\;\;\,\checkmark$&48.6&{39.0}&49.8\\
\bottomrule
\end{tabular}}
\label{tabs:component}
\vskip -0.01 in
\end{table}

\begin{table}[t]
\centering

\caption{Comparison with three typical fusion methods including concatenation (concat), fusion by alignment (align), and self-attention (self-att.) on NYUDv2. All results are obtained with the backbone RefineNet (ResNet101) of single-scale evaluation for test.}
\vspace{-0.3em}
\resizebox{136mm}{!}{
\begin{tabular}{ll|cc|cc|c}

\toprule
\multirow{3}*{Modality}&\multirow{3}*{Approach}&\multicolumn{2}{c|}{Commonly-used setting}&\multicolumn{2}{c|}{Same with our setting}&\multirow{3}*{\makecell[l]{Params used\\for fusion (M)}}\\
&&{\makecell[c]{Mean IoU (\%)}} &{$\;\;$ \makecell[l]{Params\\in total (M)}} &{\makecell[c]{Mean IoU (\%)\\RGB / Depth / Ensemble}} &{$\;$ \makecell[l]{Params\\in total (M)}} & \\
\midrule
 RGB & Uni-modal&45.5&118.1&45.5 / $\;\;\,$-$\;\;\,$ / $\;\;$-$\;\;\;$&118.1&-\\
 Depth & Uni-modal&35.8&118.1&$\;$-$\;\;$ / 35.8 / $\;\;$-$\;$&118.1&-\\
\midrule
\multirow{15}{*}{RGB-D}
& Concat (early)&47.2&120.1&47.0 / 37.5 / 47.6&118.8&0.6\\
& Concat (middle)&46.7&147.7&46.6 / 37.0 / 47.4&120.3&2.1\\
& Concat (late)&46.3&169.0&46.3 / 37.2 / 46.9&126.6&8.4\\
& Concat (all-stage)&47.5&171.7&47.8 / 36.9 / 48.3&129.4&11.2\\
\cmidrule(r){2-7}
& Align (early)&46.4&238.8&46.3 / 35.8 / 46.7&120.8&2.6\\
& Align (middle)&47.9&246.7&47.7 / 36.0 / 48.1&128.7&10.5\\
& Align (late)&47.6&278.1&47.3 / 35.4 / 47.6&160.1&41.9\\
& Align (all-stage)&46.8&291.9&46.6 / 35.5 / 47.0&173.9&55.7\\
\cmidrule(r){2-7}
& Self-att. (early)&47.8&124.9&47.7 / 38.3 / 48.2&123.6&5.4\\
& Self-att. (middle)&48.3&166.9&48.0 / 38.1 / 48.7&139.4&21.2\\
& Self-att. (late)&47.5&245.5&47.6 / 38.1 / 48.3&203.2&84.9\\
& Self-att. (all-stage)&48.7&272.3&48.5 / 37.7 / 49.1&231.0&112.8\\
\cmidrule(r){2-7}
& Ours&-&-&\textbf{49.7} / \textbf{45.1} / \textbf{51.1}&\textbf{118.2}&\textbf{0.0}\\
\bottomrule
\end{tabular}}

\label{tabs:our_implementation}
\vskip -0.1 in
\end{table}

\begin{table}[t]
\centering

\caption{Comparison with SOTA methods on semantic segmentation.}
\vspace{-0.3em}
\resizebox{136mm}{!}{
\begin{tabular}{lll|ccc|ccc}
\toprule
\multirow{3}*{Modality}&\multirow{3}*{Approach}&\multirow{3}*{\makecell[l]{Backbone \\Network}}&\multicolumn{3}{c|}{NYUDv2}&\multicolumn{3}{c}{SUN RGB-D}\\
&&&\makecell[l]{Pixel Acc.\\(\%)} & \makecell[l]{Mean Acc.\\(\%)} & \makecell[l]{Mean IoU\\(\%)}& \makecell[l]{Pixel Acc.\\(\%)} & \makecell[l]{Mean Acc.\\(\%)} & \makecell[l]{Mean IoU\\(\%)} \\

\midrule
\multirow{3}*{RGB} 
&FCN-32s  \cite{journals/corr/LongSD14}&VGG16&60.0&42.2&29.2&68.4&41.1&29.0\\

&RefineNet \cite{lin2019refinenet}&ResNet101&73.8&58.8&46.4&80.8&57.3&46.3\\
&RefineNet \cite{lin2019refinenet}&ResNet152&74.4&59.6&47.6&81.1&57.7&47.0\\
\midrule
\multirow{13.5}*{RGB-D} 
& FuseNet  \cite{conf/accv/HazirbasMDC16}&VGG16 &68.1&50.4&37.9&76.3&48.3&37.3\\
&ACNet \cite{conf/icip/HuYFW19}&ResNet50&-&-&48.3&-&-&48.1\\
&SSMA \cite{journals/ijcv/ValadaMB20}&ResNet50&75.2&60.5&48.7&81.0&58.1&45.7\\
&SSMA \cite{journals/ijcv/ValadaMB20} $\dag$&ResNet101&75.8&62.3&49.6&81.6&60.4&47.9\\
&CBN~\cite{conf/nips/VriesSMLPC17} $\dag$ & ResNet101&75.5&61.2&48.9&81.5&59.8&47.4\\
&3DGNN \cite{conf/iccv/QiLJFU17}&ResNet101&-&-&-&-&57.0&45.9\\

&SCN \cite{journals/tcyb/LinZJLH20} &ResNet152&-&-&49.6&-&-&50.7\\

& CFN \cite{conf/iccv/LinCCHH17} & ResNet152&-&-&47.7&-&-&48.1\\
&RDFNet \cite{conf/iccv/LeePH17} &ResNet101&75.6&62.2&49.1&80.9&59.6&47.2\\
&RDFNet \cite{conf/iccv/LeePH17} &ResNet152&76.0&62.8&50.1&81.5&60.1&47.7\\
\cmidrule(r){2-9}
&Ours-RefineNet (single-scale) &ResNet101&76.2&62.8&51.1&82.0&60.9&49.6\\
&Ours-RefineNet
&ResNet101&{77.2}&{63.7}&{51.7}&{82.8}&{61.9}&{50.2}\\
&Ours-RefineNet &ResNet152&{77.4}&{64.8}&{52.2}&{83.2}&{62.5}&{50.8}\\
&Ours-PSPNet &ResNet152&\textbf{77.7}&\textbf{65.0}&\textbf{52.5}&\textbf{83.5}&\textbf{63.2}&\textbf{51.1}\\

\bottomrule
\end{tabular}}
\label{tabs:seg_results}

\begin{tablenotes}
  \footnotesize
  \item[1] $\dag$ indicates our implemented results.
\end{tablenotes}

\vskip -0.2 in
\end{table}

\subsection{Semantic Segmentation}
\textbf{Datasets.} 
We evaluate our method on two public datasets NYUDv2~\cite{conf/eccv/SilbermanHKF12} and SUN RGB-D~\cite{conf/cvpr/SongLX15}, which consider RGB and depth as input. Regarding NYUDv2, we follow the standard settings and adopt the split of 795 images for training and 654 for testing, with predicting standard 40 classes~\cite{conf/cvpr/GuptaAM13}. SUN RGB-D is one of the most challenging large-scale benchmarks towards indoor semantic segmentation, containing 10,335 RGB-D images of 37 semantic classes. We use the public train-test split (5,285 vs 5,050).

\textbf{Implementation.} We consider RefineNet~\cite{lin2019refinenet}/PSPNet~\cite{conf/cvpr/ZhaoSQWJ17} as our segmentation framework whose backbone is implemented by ResNet~\cite{conf/cvpr/HeZRS16} pretrained from ImageNet dataset \cite{journals/ijcv/RussakovskyDSKS15}. 
The initial learning rates are set to $5\times10^{-4}$ and $3\times10^{-3}$ for the encoder and decoder, respectively, both of which are reduced to their halves every 100/150 epochs (total epochs 300/450) on NYUDv2 with ResNet101/ResNet152 and every 20 epochs (total epochs 60) on SUN RGB-D. The mini-batch size, momentum and weight decay are selected as 6, 0.9, and $10^{-5}$, respectively, on both datasets. 
We set $\lambda=5\times10^{-3}$ in Equation~\ref{eq:cen} and the threshold to $\theta=2\times10^{-2}$ in Equation~\ref{eq:exchange-bn}.
Unless otherwise specified, we adopt the multi-scale strategy~\cite{conf/iccv/LeePH17,lin2019refinenet} for test. 
We employ the Mean IoU along with Pixel Accuracy and Mean Accuracy as evaluation metrics following~\cite{lin2019refinenet}. Full implementation details are referred to our appendix.

\textbf{The validity of each proposed component.}
Table \ref{tabs:component} summarizes the results of different variants of CEN on NYUDv2. We have the following observations:
\textbf{1.} Compared to the unshared baseline, sharing the convolutional parameters greatly boosts the performance particularly on the Depth modality (35.8 vs 38.4). Yet, the performance will encounter a clear drop if we additionally share the BN layers. This observation is consistent with our analyses in~\textsection~\ref{subsec:net_sharing} due to the different roles of convolutional filters and BN parameters.
\textbf{2.} After carrying out directed channel exchanging under the $\ell_1$ regulation, our model gains a huge improvement on both modalities, \emph{i.e.} from 46.0 to 49.7 on RGB, and from 38.1 to 45.1 on Depth, and finally increases the ensemble Mean IoU from 47.6 to 51.1. It thus verifies the effectiveness of our proposed mechanism on this task.
\textbf{3.}
Note that the channel exchanging is only available on a certain portion of each layer (\emph{i.e.} the half of the channels in the two-modal case). When we remove this constraint and allow all channels to be exchanged by Equation~\ref{eq:exchange-bn}, the accuracy decreases, which we conjecture is owing to the detriment by impeding modal-specific propagation, if all channels are engaged in cross-modal fusion.

To further explain why channel exchanging works, Figure \ref{featuremap5} displays the feature maps of RGB and Depth, where we find that the RGB channel with non-zero scaling factor mainly characterizes the texture, while the Depth channel with non-zero factor focuses more on the boundary; in this sense, performing channel exchanging can better combine the complementary properties of both modalities.

\textbf{Comparison with other fusion baselines.}
Table~\ref{tabs:our_implementation} reports the comparison of our CEN with two aggregation-based methods: concatenation~\cite{conf/cvpr/ZengTHYSCW19} and self-attention~\cite{journals/ijcv/ValadaMB20}, and one alignment-based approach~\cite{conf/eccv/WangWTSW16},  using the same backbone. All baselines are implemented with the early, middle, late, and all stage fusion. Besides, for a more fair comparison, all baselines are further  conducted under the same setting (except channel exchanging) with ours, namely, sharing convolutions with private BNs, and preserving the propagation of all sub-networks. Full details are provided in the appendix. It demonstrates that, on both settings, our method always outperforms others by an average improvement more than 2\%. We also report the parameters used for fusion, \emph{e.g.} the aggregation weights of two modalities in concatenation. While self-attention (all-stage) attains the closest performance to us (49.1 vs 51.1), the parameters it used for fusion are considerable, whereas our fusion is parameter-free.

\textbf{Comparison with SOTAs.}
We contrast our method against a wide range of state-of-the-art methods. Their results are directly copied from previous papers if provided or re-implemented by us otherwise, with full specifications illustrated in the appendix. 
Table~\ref{tabs:seg_results} concludes that our method equipped with PSPNet (ResNet152) achieves new records remarkably superior to previous methods in terms of all metrics on both datasets. In particular, given the same backbone, our method are still much better than RDFNet~\cite{conf/iccv/LeePH17}. To isolate the contribution of RefineNet in our method, Table~\ref{tabs:seg_results} also provides the uni-modal results, where we observe a clear advantage of multimodal fusion.

\textbf{Additional ablation studies.}
In this part, we provide some additional experiments on NYUDv2, with RefineNet (ResNet101). Results are obtained with single-scale evaluation.
\textbf{1.} As $\ell_1$ enables the discovery of unnecessary channels and comes as a pre-condition of Theorem~\ref{th:theorem}, naively exchanging channels with a fixed portion (without using $\ell_1$ and threshold) could not reach good performance. For example, exchanging a fixed portion of 30\% channels only gets IoU 47.2. We also find by only exchanging 30\% channels at each down-sampling stage of the encoder, instead of every $3\times3$ convolutional layer throughout the encoder (like our CEN), the result becomes 48.6, which is much lower than our CEN (51.1). \textbf{2.} In Table~\ref{tabs:seg_results}, we provide results of our implemented CBN~\cite{conf/nips/VriesSMLPC17} by modulating the BN of depth conditional on RGB. The IoUs of CBN with unshared and shared convolutional parameters are 48.3 and 48.9, respectively.
\textbf{3.} Directly summing activations (discarding the 1st term in Equation~\ref{eq:exchange-bn}) results in IoU 48.1, which could reach 48.4 when summing with a learnt soft gate. \textbf{4.} If we replace the ensemble of expert with a concat-fusion block, the result will slightly reduce from 51.1 to 50.8. \textbf{5.} Besides, we try to exchange channels randomly like ShuffleNet or directly discard unimportant channels without channel exchanging, the IoUs of which are 46.8 and 47.5, respectively. All above ablations support the optimal design of our architecture.

\begin{figure}[t]
\centering
\hskip -0.1 in
\includegraphics[scale=0.22]{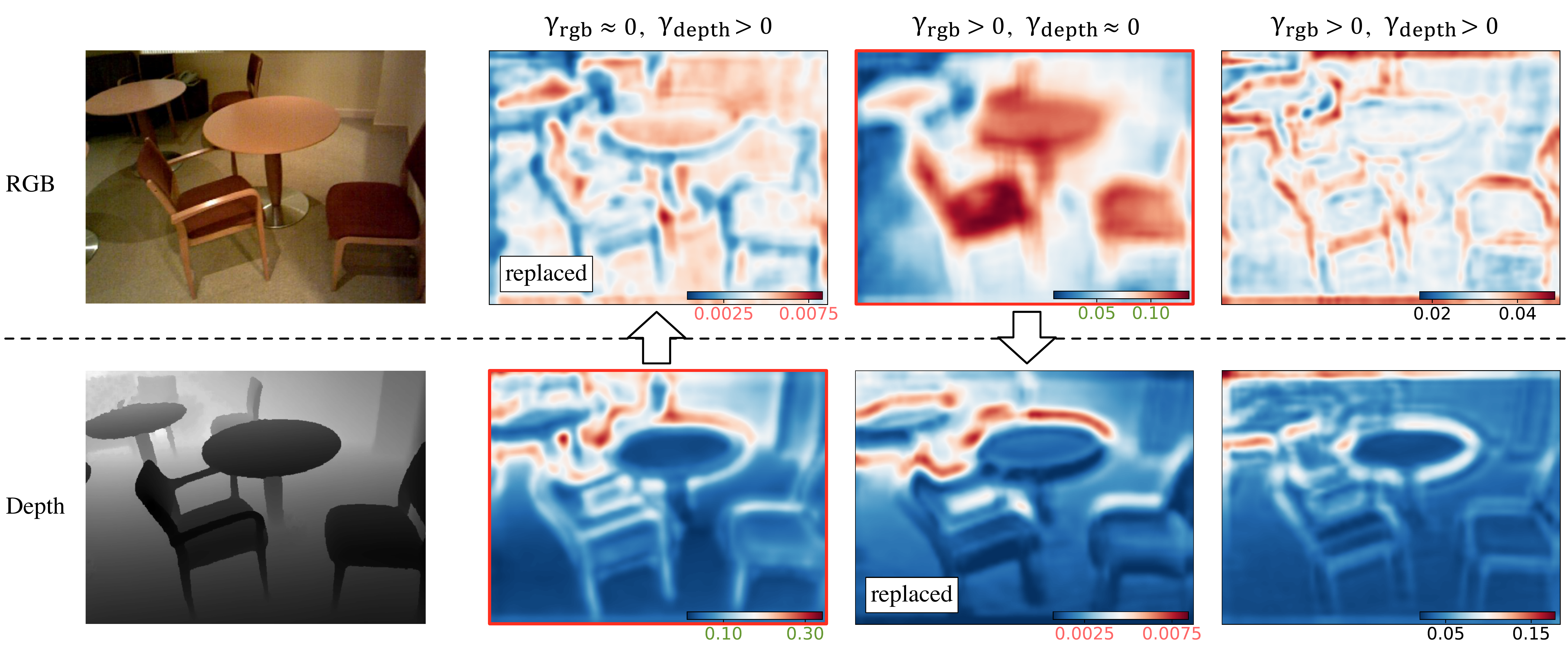}
\vskip -0.1 in
\caption{ Visualization of the averaged feature maps for RGB and Depth. From left to right: the input images, the channels of $(\gamma_{rgb}\approx 0, \gamma_{depth}> 0)$, $(\gamma_{rgb}> 0, \gamma_{depth}\approx 0)$, and $(\gamma_{rgb}> 0, \gamma_{depth}> 0)$.}
\label{featuremap5}

\vskip -0.06 in
\end{figure}

\begin{table}[t]
\centering

\caption{Comparison on image-to-image translation. Evaluation metrics are FID/KID ($\times 10^{-2}$). Lower values indicate better performance.}
\vspace{-0.3em}
\resizebox{136mm}{!}{
\begin{tabular}{p{2.2cm}|p{1.9cm}<{\centering} |p{1.3cm}p{2cm}<{\centering}p{2cm}<{\centering}p{2cm}<{\centering}p{2cm}<{\centering}p{2cm}<{\centering}}

\toprule[1pt]
$\;$Modality&Ours&Baseline&Early&Middle&Late&All-layer\\
\midrule

\multirow{4}{*}{\makecell[l]{Shade+Texture\\\;\;$\to$RGB}} 
&\multirow{4}{*}{\textbf{62.63} / \textbf{1.65}}
& Concat&{87.46 / 3.64}&{95.16 / 4.67}&{122.47 / 6.56}&{78.82 / 3.13}\\
&& Average&{93.72 / 4.22}&{93.91 / 4.27}&{126.74 / 7.10}&{80.64 / 3.24}\\
&&Align&{99.68 / 4.93}&{95.52 / 4.75}&{\;\;98.33 / 4.70}&{92.30 / 4.20}\\
&&Self-att.&{83.60 / 3.38}&{90.79 / 3.92}&{105.62 / 5.42}&{73.87 / 2.46}\\
\midrule
\multirow{4}{*}{\makecell[l]{Depth+Normal\\\;\;$\to$RGB}} 
&\multirow{4}{*}{\textbf{84.33} / \textbf{2.70}}
& Concat&{105.17 / 5.15}&{100.29 / 3.37}&{116.51 / 5.74}&{\;\;99.08 / 4.28}\\
&& Average&{109.25 / 5.50}&{104.95 / 4.98}&{122.42 / 6.76}&{\;\;99.63 / 4.41}\\
&&Align&{111.65 / 5.53}&{108.92 / 5.26}&{105.85 / 4.98}&{105.03 / 4.91}\\
&&Self-att.&{100.70 / 4.47}&{\;\;98.63 / 4.35}&{108.02 / 5.09}&{\;\;96.73 / 3.95}\\

\bottomrule[1pt]

\end{tabular}}
\label{tabs:translation}
\vskip -0.03 in
\end{table}







\begin{table}[t!]
\centering

\caption{Multimodal fusion on image translation (to RGB) with modalities from 1 to 4.}
\vspace{-0.3em}
\resizebox{136mm}{!}{
\begin{tabular}{l|cccc|ccc}

\toprule[1pt]
Modality&Depth&Normal&Texture&Shade&\makecell[l]{Depth+Normal}&\makecell[l]{Depth+Normal\\+Texture}&\makecell[l]{Depth+Normal\\+Texture+Shade}\\
\midrule
FID&113.91 &108.20 &97.51 &100.96   &84.33&60.90&57.19\\
KID ($\times 10^{-2}$)& 5.68&5.42&4.82&5.17 & 2.70& 1.56&1.33\\

\bottomrule[1pt]

\end{tabular}}
\label{tabs:multimodal}
\vskip -0.12 in
\end{table}

\subsection{Image-to-Image Translation}
\textbf{Datasets.} We adopt Taskonomy~\cite{conf/cvpr/ZamirSSGMS18}, a dataset with 4 million images of indoor scenes of about 600 buildings. Each image in Taskonomy has more than 10 multimodal representations, including depth (euclidean/zbuffer), shade, normal, texture, edge, principal curvature, etc. For efficiency, we sample 1,000 high-quality multimodal images for training, and 500 for validation.

\textbf{Implementation.} Following Pix2pix~\cite{conf/cvpr/IsolaZZE17}, we adopt the U-Net-256 structure for image translation with the consistent setups with~\cite{conf/cvpr/IsolaZZE17}. The BN computations are replaced with Instance Normalization layers (INs), and our method (Equation~\ref{eq:exchange-bn}) is still applicable. We adopt individual INs in the encoder, and share all other parameters including INs in the decoder. We set $\lambda$ to $10^{-3}$ for sparsity constraints and the threshold $\theta$ to $10^{-2}$. We adopt FID~\cite{conf/nips/HeuselRUNH17} and KID~\cite{conf/iclr/BinkowskiSAG18} as evaluation metrics, which will be introduced in our appendix.

\textbf{Comparison with other fusion baselines.}
In Table~\ref{tabs:translation}, we evaluate the performance on two specific translation cases, \emph{i.e.} Shade+Texture$\to$RGB and Depth+Normal$\to$RGB, with more examples included in the appendix. In addition to the three baselines used in semantic segmentation (Concat, Self-attention, Align),  we conduct an extra aggregation-based method by using the average operation. All baselines perform fusion under 4 different kinds of strategies: early (at the 1st conv-layer), middle (the 4th conv-layer), late (the 8th conv-layer), and all-layer fusion. As shown in Table~\ref{tabs:translation}, our method yields much lower FID/KID than others, which supports the benefit of our proposed idea once again.  

\textbf{Considering more modalities.}
We now test whether our method is applicable to the case with more than 2 modalities. For this purpose, Table~\ref{tabs:multimodal} presents the results of image translation to RGB by inputting from 1 to 4 modalities of Depth, Normal, Texture, and Shade. It is observed that increasing the number of modalities improves the performance consistently, suggesting much potential of applying our method towards various cases.

\section{Conclusion}
\vskip -0.03 in
In this work, we propose Channel-Exchanging-Network (CEN), a novel framework for deep multimodal fusion, which differs greatly with existing aggregation-based and alignment-based multimodal fusion. The motivation behind is to boost inter-modal fusion while simultaneously keeping sufficient intra-modal processing. The channel exchanging is self-guided by channel importance measured by individual BNs, making our framework self-adaptive and compact. Extensive evaluations verify the effectiveness of our method.

\section*{Acknowledgement}
This work is jointly funded by National Natural Science Foundation of China and German Research Foundation (NSFC 61621136008/DFG TRR-169) in project ``Crossmodal Learning'' \textrm{II}, Tencent AI Lab Rhino-Bird Visiting Scholars Program (VS202006), and China Postdoctoral
Science Foundation (Grant No.2020M670337).

\section*{Broader Impact}
This research enables fusing complementary information from different modalities effectively, which helps improve performance for autonomous vehicles and indoor manipulation robots, also making them more robust to environmental conditions, \emph{e.g.} light, weather. Besides, instead of carefully designing hierarchical fusion strategies in existing methods, a global criterion is applied in our work for guiding multimodal fusion, which allows easier model deployment for practical applications. A drawback of bringing deep neural networks into multimodal fusion is its insufficient interpretability.

{\small
\bibliographystyle{splncs04.bst}
\bibliography{egbib}
}


\clearpage

\section*{\LARGE Appendix}
\appendix

\section{Proofs}
\setcounter{theorem}{0}
\begin{theorem}
Suppose $\{\gamma_{m,l,c}\}_{m,l,c}$ are the BN scaling factors of any multimodal fusion network (without channel exchanging) optimized by Equation~\ref{eq:cen}. Then the probability of $\gamma_{m,l,c}$ being attracted to $\gamma_{m,l,c}=0$  during training (\emph{a.k.a.} $\gamma_{m,l,c}=0$ is the local minimum) is equal to $2\Phi(\lambda|\frac{\partial L}{\partial \bm{x}'_{m,l,c}}|^{-1})-1$, where $\Phi$ derives the cumulative probability of standard Gaussian.
\end{theorem}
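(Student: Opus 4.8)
The plan is to reduce the claim ``$\gamma_{m,l,c}=0$ is a local minimum'' to the subgradient optimality condition of an $\ell_1$-regularized scalar problem, and then to compute the probability of that condition under a Gaussian model for the BN-normalized activations.

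First I would isolate the dependence of the objective in Equation~\ref{eq:cen} on the single factor $\gamma:=\gamma_{m,l,c}$, holding all other parameters fixed, and write it as $\phi(\gamma)=L(\gamma)+\lambda|\gamma|$, where $L(\gamma)$ is the smooth empirical task loss $\frac1N\sum_i\mathcal L(\cdot)$ regarded as a function of $\gamma$ and $\lambda|\gamma|$ is the contribution of the $\ell_1$ penalty $\lambda\sum_{m,l}|\hat{\bm\gamma}_{m,l}|$ to this coordinate. Since $L$ is differentiable and $|\cdot|$ is convex with $\partial|\cdot|(0)=[-1,1]$, the point $\gamma=0$ minimizes $\phi$ locally if and only if $0\in\partial\phi(0)=\{\partial L/\partial\gamma|_{\gamma=0}\}+[-\lambda,\lambda]$, i.e.\ if and only if $\bigl|\partial L/\partial\gamma|_{\gamma=0}\bigr|\le\lambda$. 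It then remains to evaluate $\Pr\bigl[\,|\partial L/\partial\gamma|\le\lambda\,\bigr]$.

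Next I would apply the chain rule through the BN map of Equation~\ref{eq:bn}: $\dfrac{\partial L}{\partial\gamma_{m,l,c}}=\Bigl\langle\dfrac{\partial L}{\partial\bm x'_{m,l,c}},\ \dfrac{\partial\bm x'_{m,l,c}}{\partial\gamma_{m,l,c}}\Bigr\rangle$, where the inner product runs over the batch and spatial indices and $\dfrac{\partial\bm x'_{m,l,c}}{\partial\gamma_{m,l,c}}=\dfrac{\bm x_{m,l,c}-\mu_{m,l,c}}{\sqrt{\sigma_{m,l,c}^2+\epsilon}}$ is precisely the normalized activation $\tilde{\bm x}_{m,l,c}$, whose entries have empirical mean $0$ and variance $1$ over the mini-batch by the definition of BN. Modeling these entries as independent standard Gaussians and conditioning on the gradient vector $\bm g:=\partial L/\partial\bm x'_{m,l,c}$, the scalar $\partial L/\partial\gamma=\langle\bm g,\tilde{\bm x}_{m,l,c}\rangle$ is a linear combination of independent $\mathcal N(0,1)$ variables, hence $\mathcal N\!\bigl(0,\|\bm g\|^2\bigr)$ with $\|\bm g\|=|\partial L/\partial\bm x'_{m,l,c}|$. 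Since for $Z\sim\mathcal N(0,s^2)$ we have $\Pr[|Z|\le\lambda]=\Phi(\lambda/s)-\Phi(-\lambda/s)=2\Phi(\lambda/s)-1$, setting $s=|\partial L/\partial\bm x'_{m,l,c}|$ yields the claimed $2\Phi\!\bigl(\lambda\,|\partial L/\partial\bm x'_{m,l,c}|^{-1}\bigr)-1$.

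The delicate step is the Gaussian model for $\tilde{\bm x}_{m,l,c}$: BN pins the empirical mean and variance of this channel (up to $\epsilon$), so its entries are not genuinely independent---they lie on a sphere and are weakly negatively correlated---and ``$\langle\bm g,\tilde{\bm x}\rangle\sim\mathcal N(0,\|\bm g\|^2)$'' is therefore an approximation, exact only in a central-limit / large-width regime or under an explicit Gaussian-activation assumption; I would state this as a modeling hypothesis rather than try to derive it. A softer point is the bridge from ``local minimum in the $\gamma$-coordinate'' to ``attracted to $\gamma=0$ during training'': one should note, as the surrounding discussion does, that near convergence $|\partial L/\partial\bm x'_{m,l,c}|$ is typically tiny, so the probability above is close to $1$ and, once the subgradient condition holds at a coordinate, it tends to persist, making $\gamma=0$ effectively an absorbing state of the dynamics.
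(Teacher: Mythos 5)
Your proposal is correct and follows essentially the same route as the paper's proof: the paper likewise characterizes $\gamma_{m,l,c}=0$ as a local minimum via the one-sided derivatives $\frac{\partial L}{\partial \bm{x}'_{m,l,c}}\frac{\bm{x}_{m,l,c}-\mu_{m,l,c}}{\sqrt{\sigma^2_{m,l,c}+\epsilon}}\pm\lambda$ (your subgradient condition $|\partial L/\partial\gamma|\le\lambda$ is the same criterion), and then treats the BN-normalized activation as standard Gaussian by a central-limit argument to get $2\Phi(\lambda|\frac{\partial L}{\partial \bm{x}'_{m,l,c}}|^{-1})-1$. Your version is simply a slightly more explicit rendering (subdifferential language, variance $\|\bm g\|^2$ for the pixel-summed inner product, and the stated caveats about the Gaussian modeling), which matches the paper's intent.
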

\begin{proof}
The proof is straightforward, since the gradient of $L$ \emph{w.r.t.} $\gamma_{m,l,c}$ is $\frac{\partial L}{\partial \bm{x}'_{m,l,c}}\frac{\bm{x}_{m,l,c}-\mu_{m,l,c}}{\sqrt{\sigma^2_{m,l,c}+\epsilon}}+\lambda$ when $\gamma_{m,l,c}>0$, or $\frac{\partial L}{\partial \bm{x}'_{m,l,c}}\frac{\bm{x}_{m,l,c}-\mu_{m,l,c}}{\sqrt{\sigma^2_{m,l,c}+\epsilon}}-\lambda$ when $\gamma_{m,l,c}<0$\footnote{Here, we denote $\frac{\partial L}{\partial \bm{x}'_{m,l,c}}\frac{\bm{x}_{m,l,c}-\mu_{m,l,c}}{\sqrt{\sigma^2_{m,l,c}+\epsilon}}=\sum_{(i,j)=1}^{(H,W)}\frac{\partial L}{\partial \bm{x}'_{m,l,c}}(i,j)\frac{\bm{x}_{m,l,c}(i,j)-\mu_{m,l,c}}{\sqrt{\sigma^2_{m,l,c}+\epsilon}}$ for alleviation, where $i,j$ range over each pixel in $\bm{x}'_{m,l,c}$ or $\bm{x}_{m,l,c}$.}, according to the BN definition in Equation~\ref{eq:bn} and the $\ell_1$ norm in Equation~\ref{eq:cen}.
Staying around $\gamma_{m,l,c}=0$ during training implies that $\frac{\partial L}{\partial \bm{x}'_{m,l,c}}\frac{\bm{x}_{m,l,c}-\mu_{m,l,c}}{\sqrt{\sigma^2_{m,l,c}+\epsilon}}+\lambda>0$ as well as $\frac{\partial L}{\partial \bm{x}'_{m,l,c}}\frac{\bm{x}_{m,l,c}-\mu_{m,l,c}}{\sqrt{\sigma^2_{m,l,c}+\epsilon}}-\lambda<0$, the probability of which is $2\Phi(\lambda|\frac{\partial L}{\partial \bm{x}'_{m,l,c}}|^{-1})-1$ given that the quantity $\frac{\bm{x}_{m,l,c}-\mu_{m,l,c}}{\sqrt{\sigma^2_{m,l,c}+\epsilon}}$ can be considered as a random variable of standard Gaussian according to the central limit theorem.
\end{proof}

\begin{figure}[h!]
\centering
\vspace{-1em}
\includegraphics[scale=0.4]{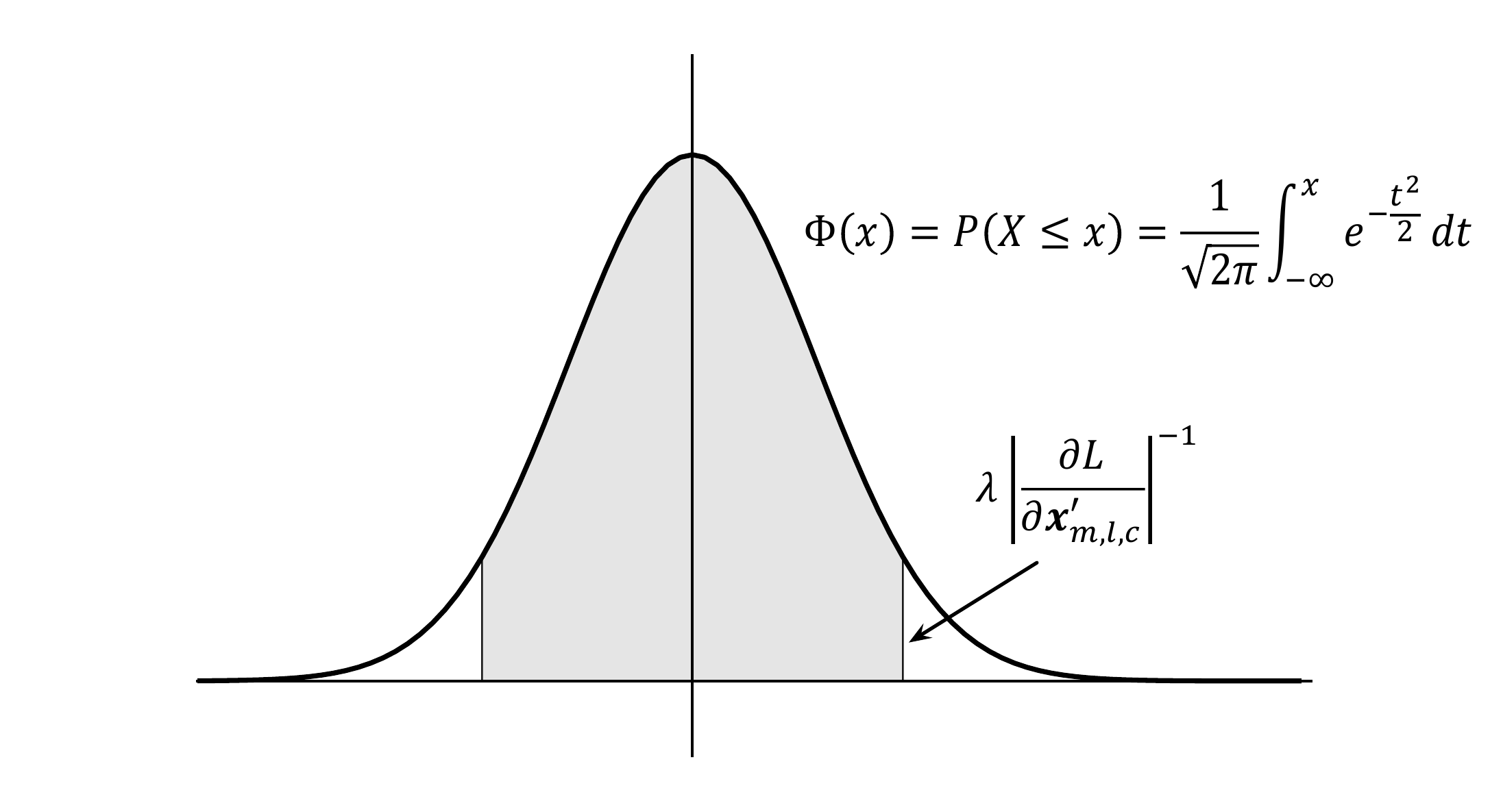}
\vspace{-1em}
\caption{Illustration of the conclusion by Theorem 1.}
\label{pic:normal_distribution}
\vspace{1em}
\end{figure}

\begin{figure}[h!]
\centering
\includegraphics[width=\linewidth]{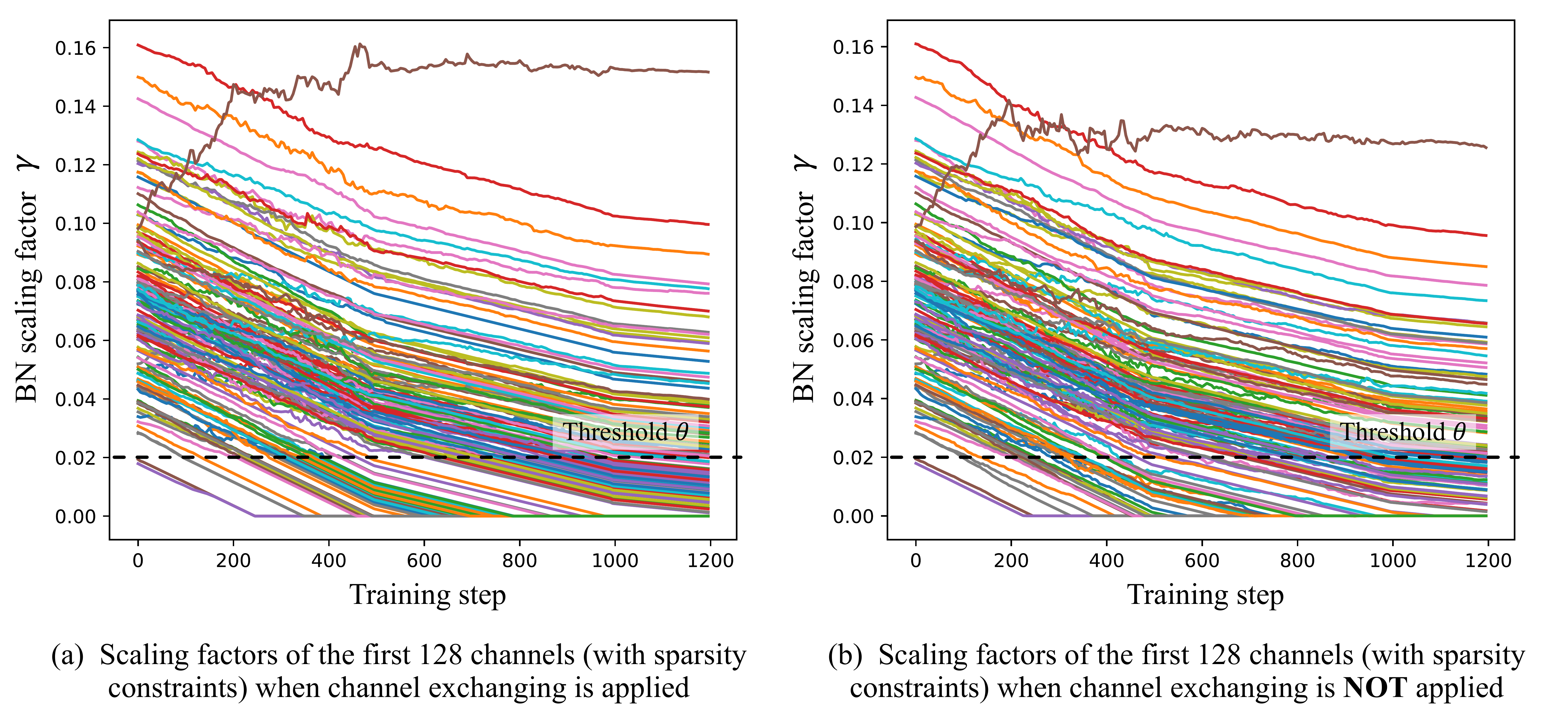}
\caption{We plot BN scaling factors with sparsity constraints vs training steps. We observe that whether using channel exchanging or not, $\gamma$ that closes to zero can hardly recover, which verifies our conjecture in Theorem 1. The experiment is conducted on NYUDv2 with RefineNet (ResNet101). We choose the 8th layer of convolutional layers that have $3\times3$ kernels, and there are totally 256 channels in this layer. Regarding the RGB modality, the sparsity constraints to BN scaling factors are applied for the first 128 channels.}
\label{pic:scaling_factor}
\end{figure}

\setcounter{theorem}{0}
\begin{corollary}
\vspace{-0.3em}
If the minimal of Equation~\ref{eq:cen} implies $\gamma_{m,l,c}=0$, then the channel exchanging by Equation~\ref{eq:exchange-bn} (assumed no crossmodal parameter sharing) will only decrease the training loss, \emph{i.e.} $\min_{f'_{1:M}}L\leq\min_{f_{1:M}}L$, given the sufficiently expressive $f'_{1:M}$ and $f_{1:M}$ which denote the cases with and without channel exchanging, respectively.
\end{corollary}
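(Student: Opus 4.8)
The plan is to argue that any minimizer of Equation~\ref{eq:cen} for the non-exchanging family $f_{1:M}$ can be simulated (or improved upon) by some member of the exchanging family $f'_{1:M}$, so that the infimum over $f'_{1:M}$ is no larger. First I would fix an optimal $f_{1:M}^\star$ achieving $\min_{f_{1:M}} L$ and invoke the hypothesis: at this optimum, every channel slated for exchanging satisfies $\gamma_{m,l,c}=0$. By the discussion preceding Theorem~\ref{th:theorem} (the gradient of $L$ w.r.t.\ $\bm{x}_{m,l,c}$ vanishes as $\gamma_{m,l,c}\to 0$), such a channel $\bm{x}'_{m,l,c}$ reduces to the constant offset $\beta_{m,l,c}$ and contributes nothing to downstream computation beyond that additive bias; in particular the value that is fed into the next nonlinearity carries no information from modality $m$. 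So the ``output side'' of channel $(m,l,c)$ is, up to the bias, a free slot.

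Next I would construct $f'_{1:M}$ explicitly from $f_{1:M}^\star$. For every exchanged channel, Equation~\ref{eq:exchange-bn} overwrites $\bm{x}'_{m,l,c}$ with $\frac{1}{M-1}\sum_{m'\neq m}\bigl(\gamma_{m',l,c}\frac{\bm{x}_{m',l,c}-\mu_{m',l,c}}{\sqrt{\sigma^2_{m',l,c}+\epsilon}}+\beta_{m',l,c}\bigr)$; since there is no crossmodal parameter sharing, I am free to keep all other parameters of $f'_{1:M}$ equal to those of $f_{1:M}^\star$. The key observation is that the overwritten value differs from the original $\beta_{m,l,c}$ only in a direction that the rest of the network can absorb: because $f'_{1:M}$ is assumed sufficiently expressive, the subsequent convolutional filter (and offset) acting on layer $l$ can be adjusted to map the incoming perturbation back to the effective constant it replaced, thereby recovering exactly the function computed by $f_{1:M}^\star$. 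Hence there exists $f'_{1:M}$ with $L(f'_{1:M}) = L(f_{1:M}^\star) = \min_{f_{1:M}} L$, and therefore $\min_{f'_{1:M}} L \le \min_{f_{1:M}} L$, which is the claim. (One should also note the $\ell_1$ term is unchanged under this construction, or can only decrease, since we are not introducing new nonzero scaling factors.)

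The main obstacle I anticipate is making precise the phrase ``given the sufficiently expressive $f'_{1:M}$''—i.e., justifying that the downstream layers can exactly undo the substitution. The cleanest route is a simulation argument at the level of a single exchanged channel: treat the replacement as injecting a fixed (data-dependent) signal into one input coordinate of the next convolution, and observe that a network which already ignores that coordinate (because its $\gamma$ was zero) loses nothing by having the coordinate repurposed; at worst it computes the same thing, and with the extra degrees of freedom it can only do better when re-optimized, yielding the inequality rather than equality. I would be careful to state this as ``there exists a choice of parameters for $f'_{1:M}$ matching $f_{1:M}^\star$'' so the conclusion follows from comparing minima, not from any claim that channel exchanging helps at a \emph{fixed} parameter setting. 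A secondary subtlety is that $\mu,\sigma^2$ in the replaced term are computed from modality $m'$'s activations under the new network, but since we hold all non-exchanged parameters fixed these statistics are exactly those of $f_{1:M}^\star$, so no circularity arises.
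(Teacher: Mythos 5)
Your proposal is correct and follows essentially the same simulation argument as the paper's proof: the paper simply makes your ``adjust the subsequent convolutional filter (and offset)'' step explicit by setting $\bm{W}'_{m,l+1,c}=0$ for the exchanged channel and absorbing its constant contribution into the next-layer bias, $b'_{m,l+1}=\bm{W}_{m,l+1,c}\otimes\sigma(\beta_{m,l,c})+b_{m,l+1}$, so that $f'_{1:M}$ reproduces $f_{1:M}$ exactly whenever $\gamma_{m,l,c}=0$. With that construction in hand, your comparison-of-minima reasoning ($\min_{f'_{1:M}}L\leq\min_{f_{1:M}}L$) is exactly the paper's conclusion.
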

\begin{proof}
We only need to prove for any $f_{1:M}$, we can design a specific $f'_{1:M}$ that shares the same output as $f_{1:M}$ if $\gamma_{m,l,c}=0$.
\begin{itemize}
    \item In $f_{1:M}$, the BN layer is followed by a ReLU function and a convolutional layer. We suppose the following convolutional weight for the $c$-th input channel $\bm{x}'_{m,l,c}$ is $\bm{W}_{m,l+1,c}$ and the bias is $b_{m,l+1}$. Thus, the quantity related to $\bm{x}'_{m,l,c}$ in the $(l+1)$-th layer is $\bm{W}_{m,l+1,c}\otimes\sigma(\bm{x}'_{m,l,c})+
    b_{m,l+1}$, where $\otimes$ denotes the convolution operation and $\sigma$ is the ReLU function. Since $\gamma_{m,l,c}=0$, this term can be translated as $\bm{W}_{m,l+1,c}\otimes\sigma(\beta_{m,l,c})+
    b_{m,l+1}$, which is a constant feature map. 
    \item As for $f'_{1:M}$, we apply the similar denotations, and attain the term related to $\bm{x}'_{m,l,c}$ in the $(l+1)$-th layer as $\bm{W}'_{m,l+1,c}\otimes\sigma(\bm{x}'_{m,l,c})+
    b'_{m,l+1}$. 
\end{itemize}
By setting $b'_{m,l+1}=\bm{W}_{m,l+1,c}\otimes\sigma(\beta_{m,l,c})+b_{m,l+1}$ and $\bm{W}'_{m,l+1,c}=0$, we will always have $f'_{1:M}=f_{1:M}$, which concludes the proof.
\end{proof}

In Figure \ref{pic:normal_distribution}, we provide an illustration of the conclusion by Theorem 1. In Figure \ref{pic:scaling_factor}, we provide experimental results to verify our conjecture in Theorem 1, \emph{i.e.} when the scaling factor of one channel is equal to zero at a certain training step, this channel will almost become redundant during later training process.

In summary, we know that $\ell_1$ makes the parameters sparse, but it can not tell if each sparse parameter will keep small in training considering the gradient in Equation~\ref{eq:cen}. Conditional on BN, Theorem 1 proves that $\gamma=0$ is attractive. Corollary 1 states that $f'$ is more expressive than $f$ when $\gamma=0$, and thus the optimal $f'$ always outputs no higher loss, which, yet, is not true for arbitrary $f'$ (\emph{e.g.} $f'=10^6$). Besides, as stated, Corollary 1 holds upon unshared convolutional parameters, and is consistent with Table \ref{tabs:supp_component} in the unshared scenario (full-channel: 49.1 vs half-channel: 48.5), although full-channel exchanging is worse under the sharing setting.

\section{Implementation Details}
\label{sec:details}

In our experiments, we adopt ResNet101, ResNet152 for semantic segmentation and U-Net-256 for image-to-image translation. Regarding both ResNet structures, we apply sparsity constraints on Batch-Normalization (BN) scaling factors \emph{w.r.t.} each convolutional layer (conv) with $3\times3$ kernels. These scaling factors further guide the channel exchanging process that exchanges a portion of feature maps after BN. For the conv layer with $7\times7$ kernels at the beginning of ResNet, and all other conv layers with $1\times1$ kernels, we do not apply sparsity constraints or channel exchanging. For U-Net, we apply sparsity constraints on Instance-Normalization (IN) scaling factors \emph{w.r.t.} all conv layers (eight layers in total) in the encoder of the generator, and each is followed by channel exchanging.

We mainly use three multimodal fusion baselines in our paper, including concatenation, alignment and self-attention. Regarding the concatenation method, we stack multimodal feature maps along the channel, and then add a $1\times1$ convolutional layer to reduce the number of channels back to the original number. The alignment fusion method is a re-implementation of \cite{conf/eccv/WangWTSW16}, and we follow its default settings for hyper-parameter, \emph{e.g.} using 11 kernel functions for the multiple kernel Maximum Mean Discrepancy. The self-attention method is a re-implementation of the SSMA block proposed in \cite{journals/ijcv/ValadaMB20}, where we also follow the default settings, \emph{e.g.} setting the channel reduction ratio $\eta$ to 16.

In Table~\ref{tabs:our_implementation}, we adopt early, middle, late and all-stage fusion for each baseline method. In ResNet101, there are four stages with 3, 4, 23, 3 blocks, respectively. The early fusion, middle fusion and late fusion refer to fusing after the 2nd stage, 3rd stage and 4th stage respectively. All-stage fusion refers to fusing after the four stages.

We use a NVIDIA Tesla V100 with 32GB for the experiments.

We now introduce the metrics used in our image-to-image translation task. In Table~\ref{tabs:translation}, we adopt the following evaluation metrics:

Fréchet-Inception-Distance (FID) proposed by \cite{conf/nips/HeuselRUNH17}, contrasts the statistics of generated samples against real samples. The FID fits a Gaussian distribution to the hidden activations of InceptionNet for each compared image set and then computes the Fréchet distance (also known as the Wasserstein-2 distance) between those Gaussians. Lower FID is better, corresponding to generated images more similar to the real. 

Kernel-Inception-Distance (KID) developed by~\cite{conf/iclr/BinkowskiSAG18}, is a metric similar to the FID but uses the squared Maximum-Mean-Discrepancy (MMD) between Inception representations with a polynomial kernel. Unlike FID, KID has a simple unbiased estimator, making it more reliable especially when there are much more inception features channels than image numbers. Lower KID indicates more visual similarity between real and generated images. Regarding our implementation of KID, the hidden representations are derived from the Inception-v3 pool3 layer.

\section{Additional Results}

We provide three more image translation cases in Table \ref{tabs:supp_translation}, including RGB+Shade$\to$Normal, RGB+Normal$\to$Shade and RGB+Edge$\to$Depth. For baseline methods, we adopt the same settings with Table~\ref{tabs:translation}, by adopting early (at the 1st conv-layer), middle (the 4th conv-layer), late (the 8th conv-layer) and all-layer fusion. We adopt MAE (L1 loss) and MSE (L2 loss) as evaluation metrics, and lower values indicate better performance. Our method yields lower MAE and MSE than baseline methods. 

\begin{table}[h]
\centering

\caption{Comparison on image-to-image translation. Evaluation metrics adopted  are MAE ($\times 10^{-1}$)/MSE ($\times 10^{-1}$). Lower values indicate better performance.}
\resizebox{136mm}{!}{
\begin{tabular}{p{2.2cm}|p{1.9cm}<{\centering} |p{1.3cm}p{2cm}<{\centering}p{2cm}<{\centering}p{2cm}<{\centering}p{2cm}<{\centering}}

\toprule[1pt]
$\;$Modality&Ours&Baseline&Early&Middle&Late&All-layer\\
\midrule

\multirow{4}{*}{\makecell[l]{RGB+Shade\\\;\;$\to$Normal}} 
&\multirow{4}{*}{\textbf{1.12} / \textbf{2.51}}& Concat&{1.33 / 2.83}&{1.22 / 2.65}&{1.39 / 2.88}&{1.34 / 2.85}\\
&& Average&{1.42 / 3.05}&{1.26 / 2.70}&{1.40 / 2.90}&{1.28 / 2.83}\\
&&Align&{1.45 / 3.11}&{1.39 / 2.93}&{1.28 / 2.76}&{1.52 / 3.25}\\
&&Self-att.&{1.30 / 2.82}&{1.18 / 2.59}&{1.42 / 2.91}&{1.26 / 2.76}\\
\midrule

\multirow{4}{*}{\makecell[l]{RGB+Normal\\\;\;$\to$Shade}} 
&\multirow{4}{*}{\textbf{1.10} / \textbf{1.72}}& Concat&{1.56 / 2.45}&{1.38 / 2.12}&{1.26 / 1.92}&{1.28 / 2.02}\\
&& Average&{1.46 / 2.29}&{1.28 / 2.04}&{1.51 / 2.39}&{1.23 / 1.86}\\
&&Align&{1.39 / 2.26}&{1.32 / 2.16}&{1.27 / 2.04}&{1.41 / 2.21}\\
&&Self-att.&{1.21 / 1.83}&{1.15 / 1.73}&{1.45 / 2.28}&{1.18 / 1.76}\\
\midrule

\multirow{4}{*}{\makecell[l]{RGB+Edge\\\;\;$\to$Depth}} 
&\multirow{4}{*}{\textbf{0.28} / \textbf{0.66}}& Concat&{0.34 / 0.75}&{0.32 / 0.74}&{0.38 / 0.79}&{0.33 / 0.75}\\
&& Average&{0.36 / 0.78}&{0.34 / 0.76}&{0.36 / 0.77}&{0.33 / 0.74}\\
&&Align&{0.44 / 0.89}&{0.39 / 0.82}&{0.42 / 0.86}&{0.44 / 0.90}\\
&&Self-att.&{0.30 / 0.71}&{0.33 / 0.73}&{0.34 / 0.75}&{0.30 / 0.70}\\

\bottomrule[1pt]

\end{tabular}}
\label{tabs:supp_translation}

\end{table}

\section{Results Visualization}
\label{sec:visual}

In Figure \ref{pic:seg_results} and Figure \ref{pic:city_seg_results}, we provide results visualization for the semantic segmentation task. We choose three baselines including concatenation (concat), alignment (align) and self-attention (self-att.). Among them, concatenation and self-attention methods adopt all-stage fusion, and the alignment method adopts middle fusion (fusion at the end of the 2nd ResNet stage). 

In Figure \ref{pic:trans_rgb}, Figure \ref{pic:trans_depth} and Figure \ref{pic:trans_normal_shade}, we provide results visualization for the image translation task. Regarding this task, concatenation and self-attention methods adopt all-layer fusion (fusion at all eight layers in the encoder), and the alignment method adopts middle fusion (fusion at the 4th layer). We adopt these settings in order to achieve high performance for each baseline method.

In the captions of these figures, we detail the prediction difference of different methods.

\section{Ablation Studies}

In Table \ref{tabs:supp_component}, we provide more cases as a supplement to Table \ref{tabs:component}. Specifically, we compare the results of channel exchaging when using shared/unshared conv parameters. According to these results, we believe our method is generally useful and channels are aligned to some extent even under the unshared setting.

In Table \ref{tabs:supp_multimodal}, we verify that sharing convolutional layers (convs) but using individual Instance-Normalization layers (INs) allows 2$\sim$4 modalities trained in a single network, achieving even better performance than training with individual networks. Again, if we further sharing INs, there will be an obvious performance drop. More detailed comparison is provided in Table \ref{tabs:supp_unshare_in}.

For the experiment Shade+Texture+Depth$\to$RGB with shared convs and unshared INs, 
in Figure \ref{pic:featuremap_onelayer}, we plot the proportion of IN scaling factors at the 7th conv layer in the encoder of U-Net. We compare the scaling factors when no sparsity constraints, sparsity constraints applied on all channels, and sparsity constraints applied on disjoint channels. In Figure \ref{pic:featuremap_all}, we further compare scaling factors on all conv layers. In Figure \ref{sec:params_sensitivity}, we provide sensitivity analysis for $\lambda$ and $\theta$.

\begin{table}[h]
\centering
\caption{Supplement to Table \ref{tabs:component} with more cases. Detailed results for different versions of our CEN on NYUDv2. All results are obtained with the backbone RefineNet (ResNet101) of single-scale evaluation for test. We observe that sharing convs (with unshared BNs) results in better performance for our method.}
\resizebox{115mm}{!}{
\begin{tabular}{llll|p{1.2cm}<{\centering}p{1.2cm}<{\centering}p{1.4cm}<{\centering}}

\toprule
\multirow{2}*{Convs}&\multirow{2}*{\makecell[l]{BNs}}&\multirow{2}*{$\ell_1$ Regulation}&\multirow{2}*{Exchange}&\multicolumn{3}{c}{Mean IoU (\%)}\\
&&&&RGB & Depth &Ensemble$\,$\\

\midrule
Unshared&Unshared&$\;\;\;\;\;\;\;\;\times$&$\;\;\;\;\;\,\times$&45.5&35.8&47.6\\
Shared & Shared&$\;\;\;\;\;\;\;\;\times$&$\;\;\;\;\;\,\times$&{43.7}&{35.5}&{45.2}\\
Shared&Unshared&$\;\;\;\;\;\;\;\;\times$&$\;\;\;\;\;\,\times$&46.2&38.4&48.0\\
\midrule
Unshared &Unshared  & Half-channel&$\;\;\;\;\;\,\times$&45.1&35.5&47.3\\
Unshared &Unshared  & Half-channel&$\;\;\;\;\;\,\checkmark$&{46.5}&{41.6}&{48.5}\\
Shared &Unshared  & Half-channel&$\;\;\;\;\;\,\times$&46.0&38.1&47.7\\
Shared &Unshared  & Half-channel &$\;\;\;\;\;\,\checkmark$&\textbf{49.7}&\textbf{45.1}&\textbf{51.1}\\
\midrule
Unshared &Unshared &All-channel&$\;\;\;\;\;\,\times$&44.6&35.3&46.6\\
Unshared &Unshared &All-channel&$\;\;\;\;\;\,\checkmark$&46.8&41.7&49.1\\
Shared &Unshared &All-channel&$\;\;\;\;\;\,\times$&46.1&37.9&47.5\\
Shared &Unshared &All-channel&$\;\;\;\;\;\,\checkmark$&48.6&{39.0}&49.8\\
\bottomrule
\end{tabular}}
\label{tabs:supp_component}
\end{table}

\begin{figure}[h]
\centering
\includegraphics[width=0.99\linewidth]{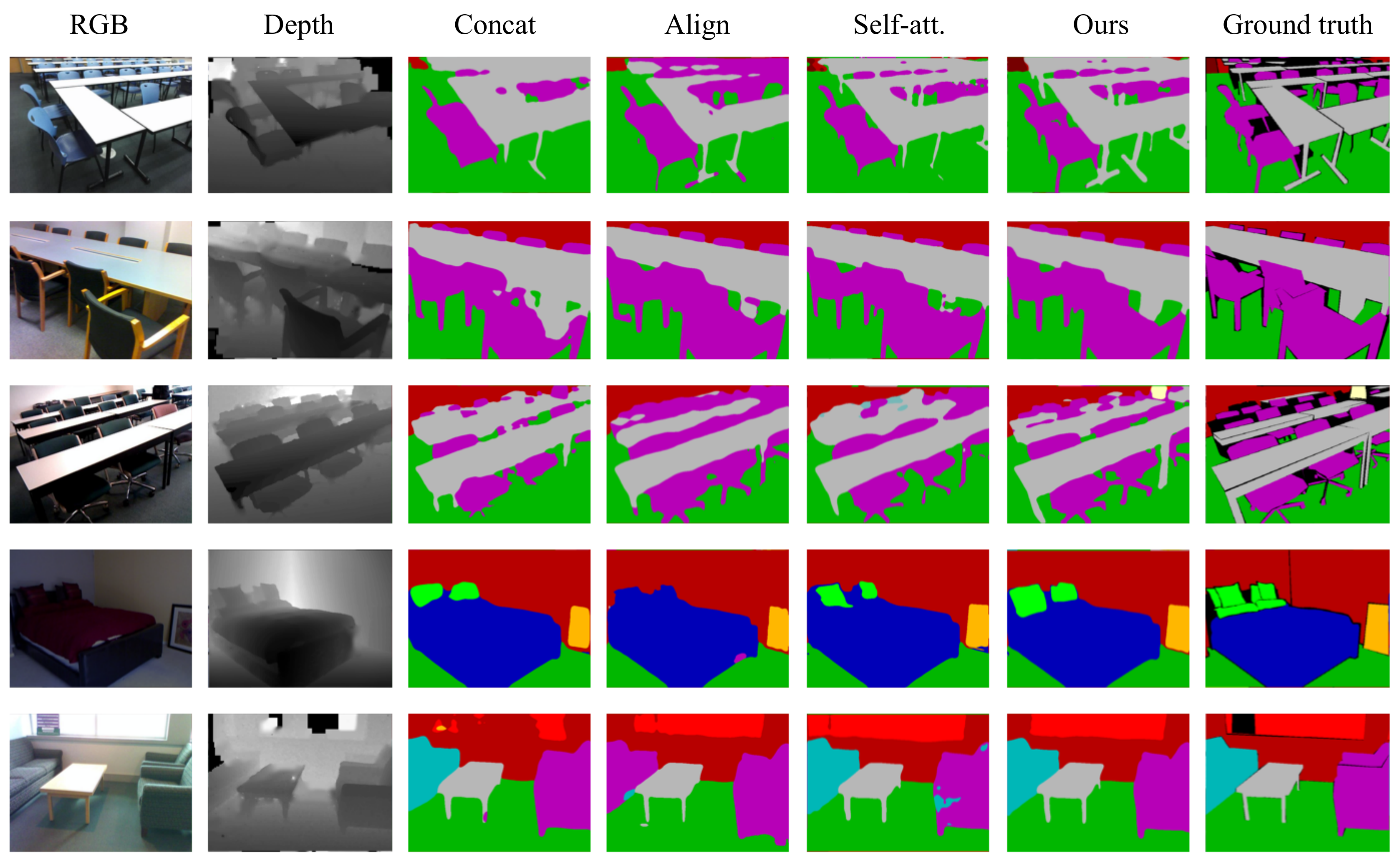}
\caption{Visualization results for semantic segmentation. Images are collected from NYUDv2 and SUN RGB-D dataset. All results are obtained with the backbone RefineNet (ResNet101) of single-scale evaluation for test. We choose tough images where a number of tables and chairs need to be predicted. Besides, we compare segmentation results on images with low/high light intensity. we observe that the concatenation method is more sensitive to noises of the depth input (see the window at bottom line). Both concatenation and self-attention methods are weak in predicting thin objects \emph{e.g.} table legs and chair legs. These objects are usually missed in the depth input, which may disturb the prediction results during fusion. Compared to baseline fusion methods, the prediction results of our method preserve more details, and are more robust to the light intensity.}
\label{pic:seg_results}
\end{figure}

\begin{figure}[h]
\centering
\includegraphics[width=0.98\linewidth]{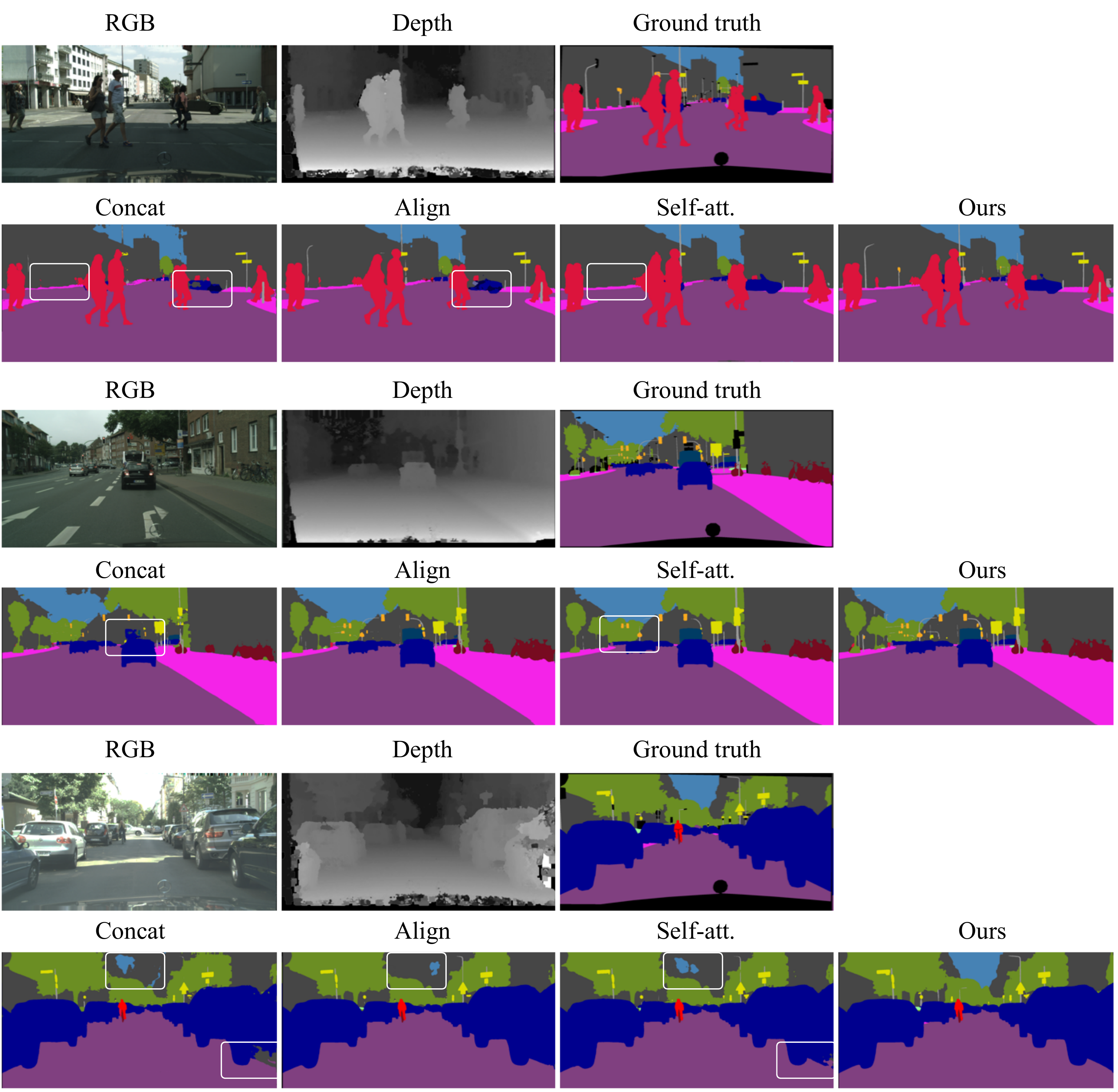}
\vspace{0.3em}
\caption{Visualization results for semantic segmentation on Cityscapes dataset \cite{conf/cvpr/CordtsORREBFRS16}. All results are obtained with the backbone PSPNet (ResNet101) of single-scale evaluation for test. Cityscapes is an outdoor dataset containing images from 27 cities in Germany and neighboring countries. The dataset contains 2,975 training, 500 validation and 1,525 test images. There are 20,000 additional coarse annotations provided by the dataset, which are not used for training in our experiments. For the baseline methods, we use white frames to highlight the regions with poor prediction results. We can observe that when the light intensity is high, the baseline methods are weak in capturing the boundary between the sky and buildings using the depth information. Besides, the concatenation and self-attention methods do not preserve fine-grained objects, \emph{e.g.} traffic signs, and are sensitive to noises of the depth input (see the rightmost vehicle in the first group). In contrast, the prediction of our method are better at these aforementioned aspects.}
\label{pic:city_seg_results}
\end{figure}

\begin{figure}[h]
\centering
\vspace{-1em}
\includegraphics[scale=0.465]{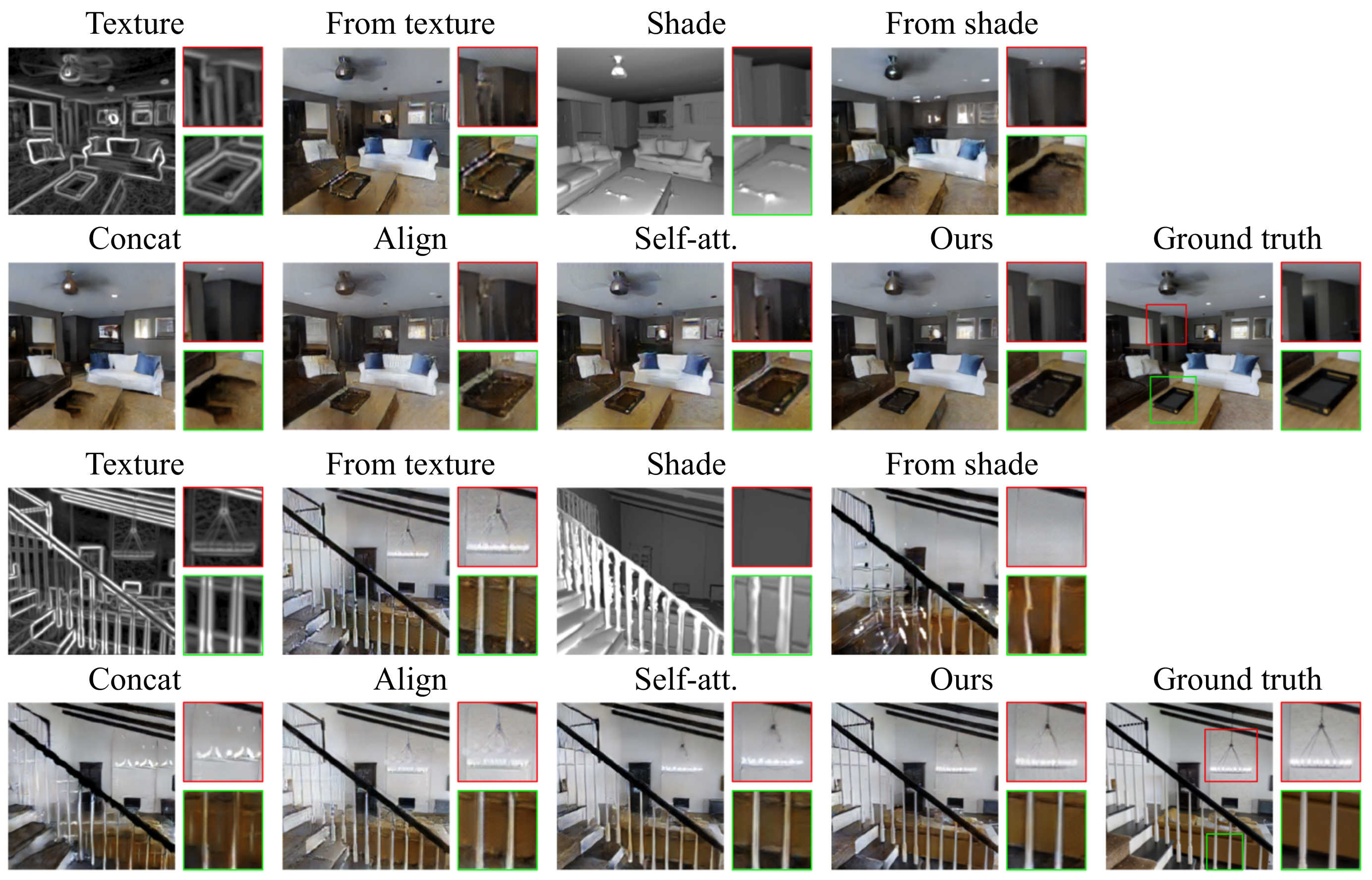}

\caption{Two groups of results comparison for image translation from Texture and Shade to RGB. We observe that the prediction solely predicted from the texture is vague at boundary lines, while the prediction from the shade misses some opponents, \emph{e.g.} the pendant lamp, and is weak in predicting handrails. When fusing the two modalities, the concatenation method is uncertain at the regions where both modalities have disagreements. Alignment and self-attention are still weak in combining both modalities at details. Our results are clear at boundaries and fine-grained details.}
\label{pic:trans_rgb}
\end{figure}

\begin{figure}[h]
\centering
\includegraphics[scale=0.465]{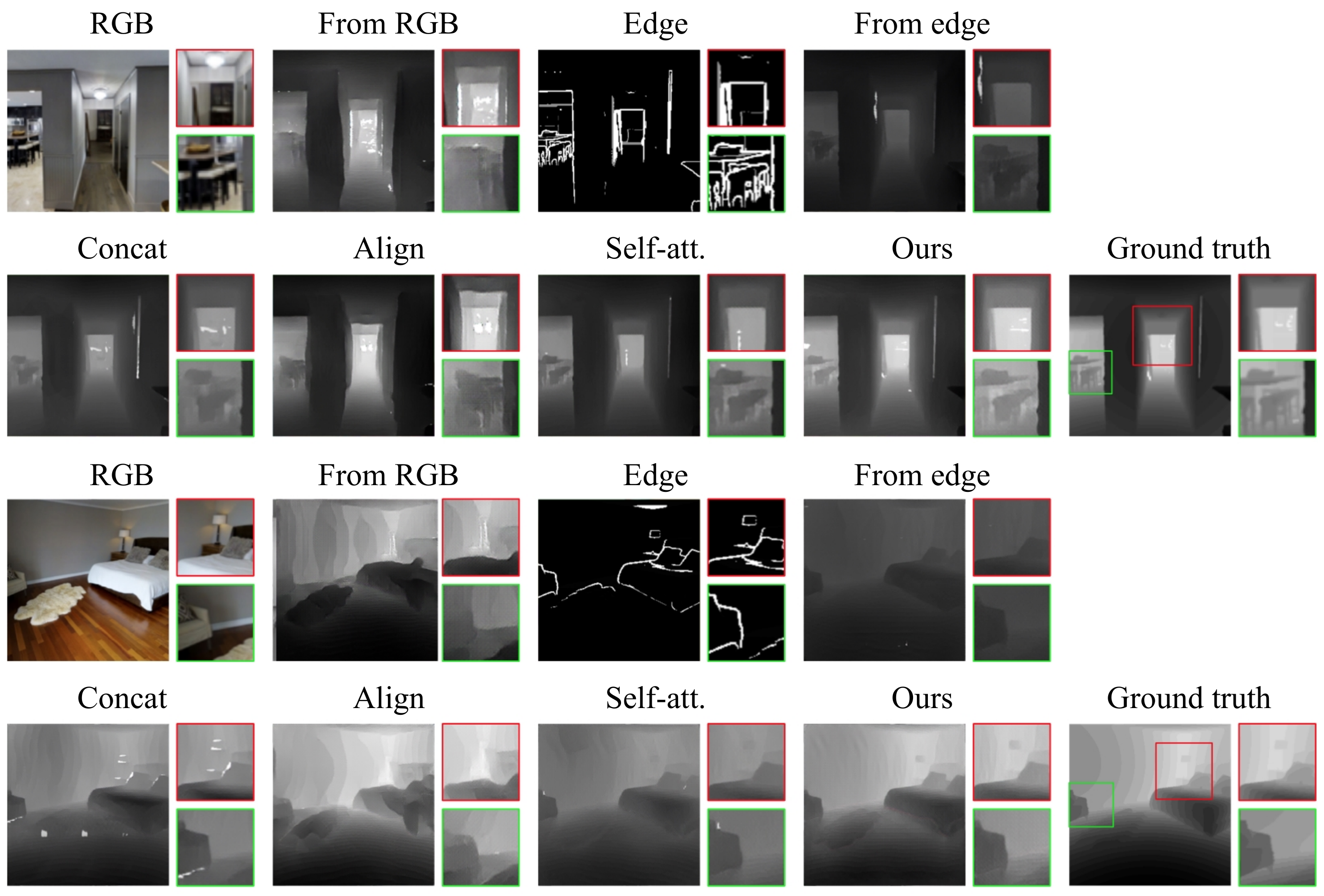}

\caption{Two groups of results comparison for image translation from RGB and Edge to Depth. It is straightforward to find the benefits of multimodal fusion in this figure. The depth predicted by RGB is good at predicting numerical values, but is weak in capturing boundaries, which results in curving walls. Oppositely, the depth predicted by the edge well captures boundaries, but is weak in determining numerical values. The alignment fusion method is still weak in capturing boundaries. Both concatenation and self-attention methods are able to combine the advantages of both modalities, but the numerical values are still obviously lower than the ground truth. Our prediction achieves better performance compared to baseline methods.}
\label{pic:trans_depth}
\vspace{-1em}
\end{figure}

\begin{figure}[h]
\centering
\includegraphics[scale=0.465]{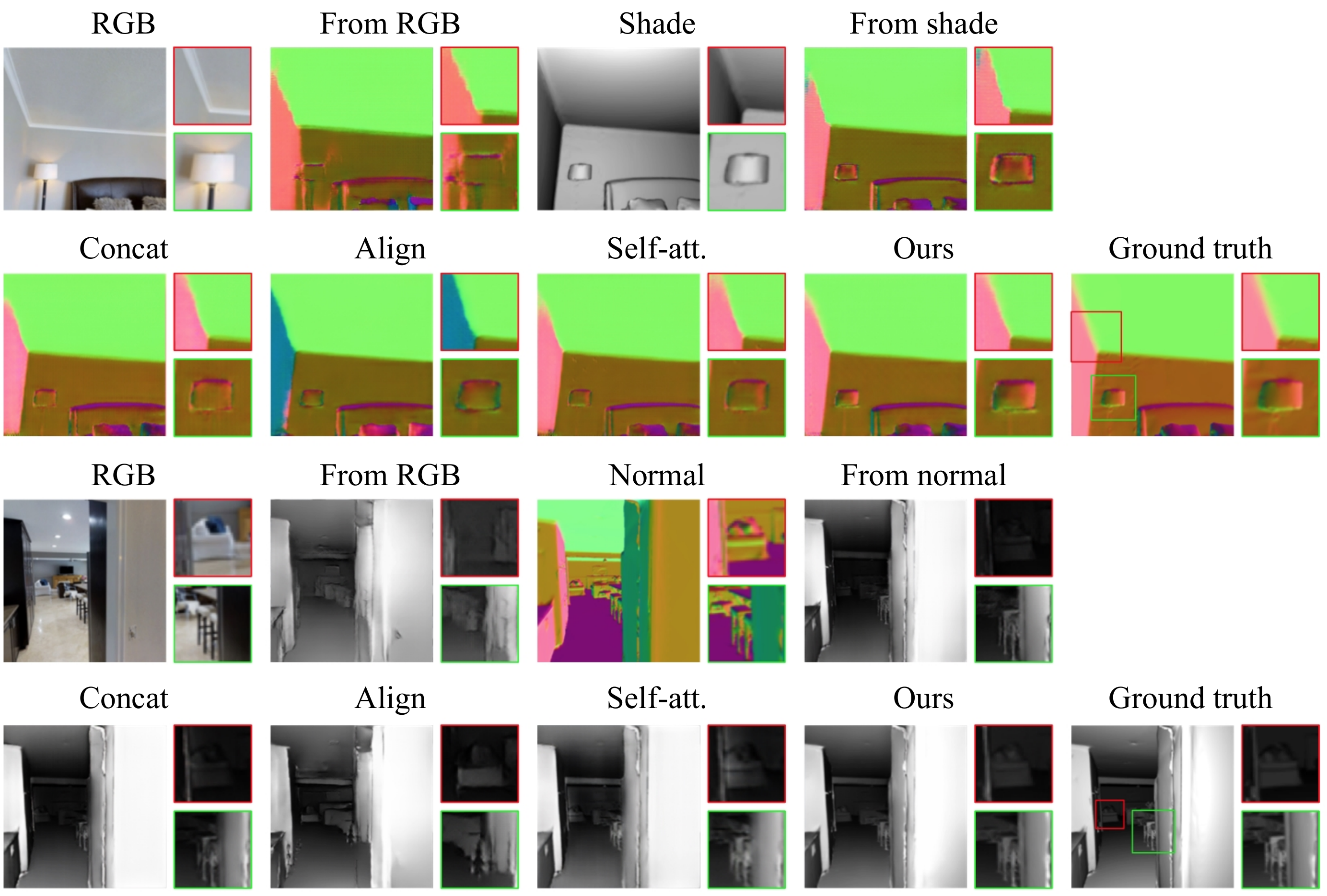}
\caption{Results comparison for image translation from RGB and Shade to Normal (upper group), and from RGB and Normal to Shade (lower group). Our fusion method again outperforms the other methods regarding both overall performance and details. }
\label{pic:trans_normal_shade}
\end{figure}

\begin{table}[h]
\centering

\caption{We compare training multimodal features in a parallel manner with different parameter sharing settings. Results of the proposed fusion method are reported at the last column. Evaluation metrics are FID/KID ($\times 10^{-2}$). We observe that the convolutional layers can be shared as long as we leave individual INs for different modalities, achieving even better performance.}
\resizebox{136mm}{!}{
\begin{tabular}{p{2.5cm} p{1.7cm}|p{2.6cm}<{\centering}p{2.6cm}<{\centering}p{2.6cm}<{\centering}|p{2.6cm}<{\centering}}

\toprule[1pt]
Modality&\makecell[l]{Network\\stream}&\makecell[c]{Unshared convs\\unshared INs}&\makecell[c]{Shared convs\\shared INs}&\makecell[c]{Shared convs\\unshared INs}&\makecell[c]{Multi-modal\\fusion}\\
\midrule

\multirow{3.5}{*}{\makecell[l]{Shade+Texture\\\;\;$\to$RGB}} 
& Shade&{102.21 / 5.25}&{112.40 / 5.58}&{100.69 / 4.51}&{72.07 / 2.32}\\
& Texture&{\;\;98.19 / 4.83}&{102.28 / 5.22}&{\;\;93.40 / 4.18}&{65.60 / 1.82}\\
\cmidrule(r){2-6}
&Ensemble&{\;\;92.72 / 4.15}&{\;\;96.31 / 4.36}&{\;\;87.91 / 3.73}&{62.63 / 1.65}\\
\midrule
\multirow{4.5}{*}{\makecell[l]{Shade+Texture\\+Depth\\\;\;$\to$RGB}} 
& Shade&{101.86 / 5.18}&{115.51 / 5.77}&{\;\;98.49 / 4.07}&{ 69.37 / 2.21}\\
& Texture&{\;\;98.60 / 4.89}&{104.39 / 4.54}&{\;\;95.87 / 4.27}&{64.70 / 1.73}\\
& Depth&{114.18 / 5.71}&{121.40 / 6.23}&{107.07 / 5.19}&{71.61 / 2.27}\\
\cmidrule(r){2-6}
&Ensemble&{\;\;91.30 / 3.92}&{100.41 / 4.73}&{\;\;84.39 / 3.45}&{58.35 / 1.42}\\
\midrule
\multirow{5.5}{*}{\makecell[l]{Shade+Texture\\+Depth+Normal\\\;\;$\to$RGB}} 
& Shade&{100.83 / 5.06}&{131.74 / 7.48}&{\;\;96.98 / 4.23}&{68.70 / 2.14}\\
& Texture&{\;\;97.34 /  4.77}&{109.45 / 4.86}&{\;\;94.64 / 4.22}&{63.26 / 1.69}\\
& Depth&{114.50 / 5.83}&{125.54 / 6.48}&{109.93 / 5.41}&{ 70.47 / 2.09}\\
& Normal&{108.65 / 5.45}&{113.15 / 5.72}&{\;\;99.38 / 4.45}&{67.73 / 1.98}\\
\cmidrule(r){2-6}
&Ensemble&{\;\;89.52 / 3.80}&{102.78 / 4.67}&{\;\;86.76 / 3.63}&{57.19 / 1.33}\\

\bottomrule[1pt]

\end{tabular}}
\label{tabs:supp_multimodal}
\end{table}

\begin{table}[h]
\centering

\caption{An Instance-Normalization layer consists of four components, including scaling factors $\bm{\gamma}$, offsets $\bm{\beta}$, running mean $\bm{\mu}$ and variance $\bm{\sigma}^2$. Following Table \ref{tabs:multimodal}, we further compare the evaluation results when using unshared $\bm{\gamma},\bm{\beta}$ only, and using unshared $\bm{\mu},\bm{\sigma}^2$ only. Evaluation metrics are FID/KID ($\times 10^{-2}$). We observe these four components of INs are all essential to be unshared. Besides, using unshared scaling factors and offsets seems to be more important.}
\resizebox{136mm}{!}{
\begin{tabular}{p{2.5cm} p{1.7cm}|p{2.6cm}<{\centering}p{2.6cm}<{\centering}|p{3cm}<{\centering}p{3cm}<{\centering}}

\toprule[1pt]
Modality&\makecell[l]{Network\\stream}&\makecell[c]{Unshared convs\\unshared INs}&\makecell[c]{Shared convs\\unshared INs}&\makecell[c]{Shared convs,$\bm{\gamma}$,$\bm{\beta}$\\unshared $\bm{\mu}$,$\bm{\sigma}^2$}&\makecell[c]{Shared convs,$\bm{\mu}$,$\bm{\sigma}^2$\\unshared $\bm{\gamma}$,$\bm{\beta}$}\\
\midrule

\multirow{4.5}{*}{\makecell[l]{Shade+Texture\\+Depth\\\;\;$\to$RGB}} 
& Shade&{101.86 / 5.18}&{\;\;98.49 / 4.07}&{ 107.86 / 5.53}&{ 105.29 / 5.29}\\
& Texture&{\;\;98.60 / 4.89}&{\;\;95.87 / 4.27}&{105.46 / 5.25}&{102.90 / 5.06}\\
& Depth&{114.18 / 5.71}&{102.07 / 4.89}&{118.35 / 6.07}&{114.35 / 5.80}\\
\cmidrule(r){2-6}
&Ensemble&{\;\;91.30 / 3.92}&{\;\;84.39 / 3.45}&{\;\;96.30 / 4.41}&{\;\;92.25 / 4.02}\\
\midrule
\multirow{5.5}{*}{\makecell[l]{Shade+Texture\\+Depth+Normal\\\;\;$\to$RGB}} 
& Shade&{100.83 / 5.06}&{\;\;96.98 / 4.23}&{113.56 / 5.65}&{102.74 / 5.17}\\
& Texture&{\;\;97.34 /  4.77}&{\;\;94.64 / 4.22}&{105.36 / 5.32}&{\;\;97.53 / 4.56}\\
& Depth&{114.50 / 5.83}&{109.93 / 5.41}&{ 119.31 /  6.20}&{ 112.73 /  5.60}\\
& Normal&{108.65 / 5.45}&{\;\;99.38 / 4.45}&{108.01 /  5.06}&{100.34 /  4.53}\\
\cmidrule(r){2-6}
&Ensemble&{\;\;89.52 / 3.80}&{\;\;86.76 / 3.63}&{\;\;95.56 / 4.64}&{\;\;89.26 / 3.91}\\

\bottomrule[1pt]
\end{tabular}}
\label{tabs:supp_unshare_in}

\end{table}

\begin{figure}[t]
\centering
\includegraphics[scale=0.23]{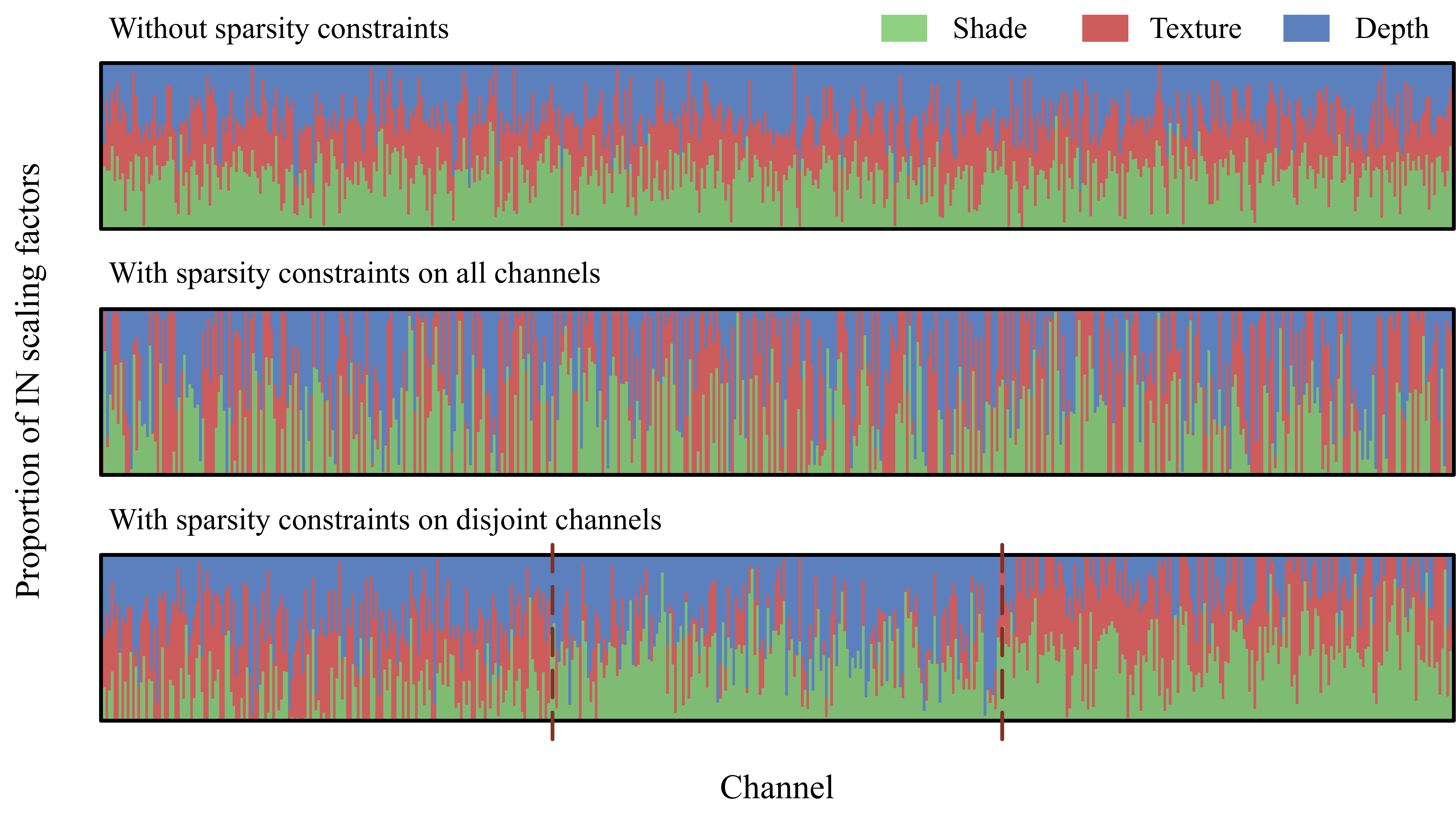}
\caption{We use shared convs and unshared INs, and plot the proportion of scaling factors for each modality, at the 7th conv layer, \emph{i.e.} $\gamma_c^{m,l,c}/(\gamma_c^{1,l,c}+\gamma_c^{2,l,c}+\gamma_c^{3,l,c}),$ where $m=1,2,3$ corresponding to Shade, Texture and Depth respectively, and $l=7$. \textbf{Top}: no sparsity constraints are applied, where the scaling factor of each modality occupies a certain proportion at each channel. \textbf{Middle}: sparsity constraints are applied to all channels, where scaling factors of one modality could occupy a large proportion, indicating the channels are re-allocated to different modalities under the sparsity constraints. Yet this setting is not very suitable for channel exchanging, as a redundant feature map of one modality may be replaced by another redundant feature map. \textbf{Bottom}: sparsity constraints are applied on disjoint channels, which is our default setting.}
\label{pic:featuremap_onelayer}
\end{figure}

\begin{figure}[t]
\centering
\includegraphics[width=\linewidth]{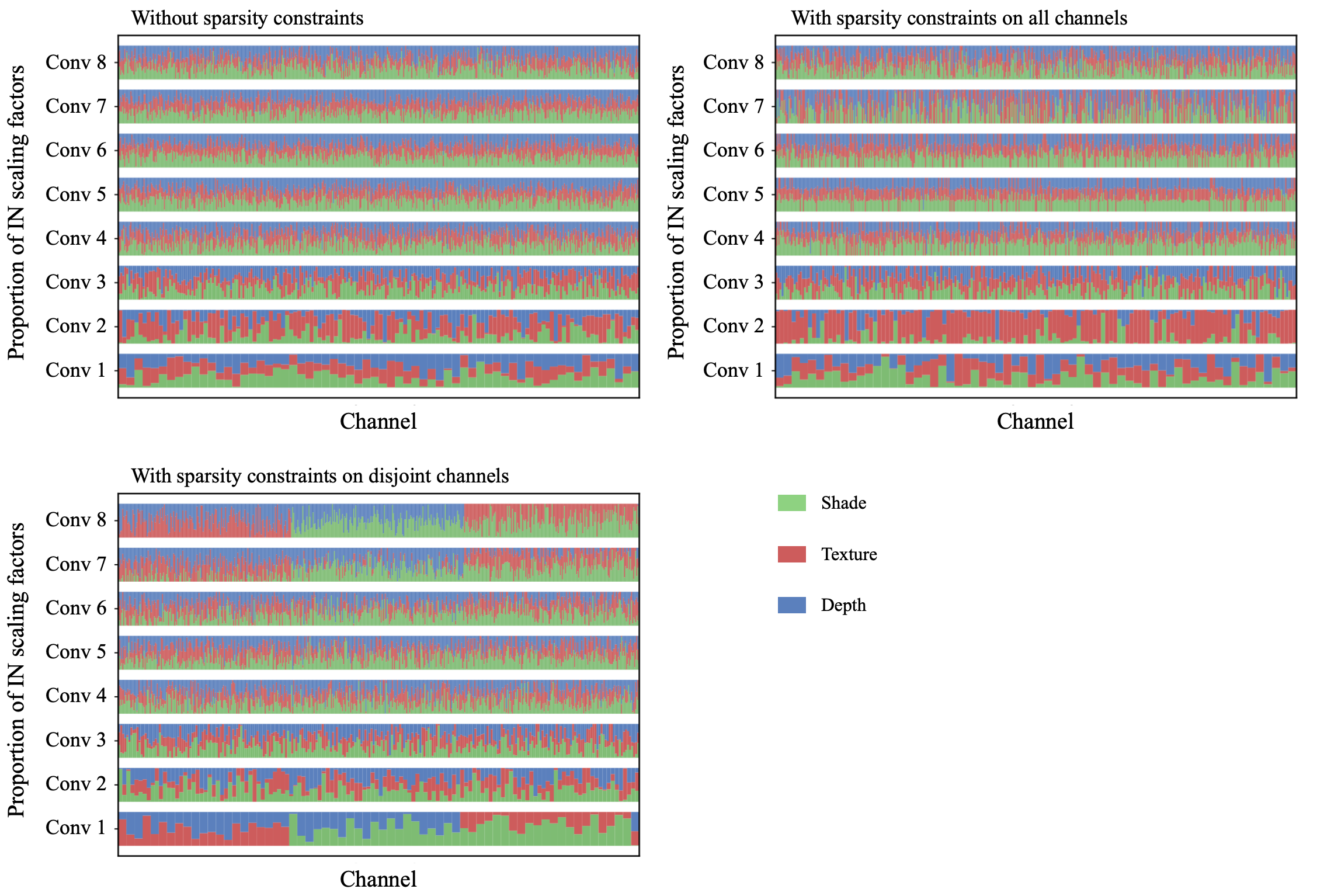}
\caption{Proportion of scaling factors in the U-Net encoder. We provide results at all layers. \textbf{Upper left}: no sparsity constraints are applied; \textbf{Upper right}: sparsity constraints are applied on all channels; \textbf{Bottom left}: sparsity constraints are applied on disjoint channels.}
\label{pic:featuremap_all}
\end{figure}

\begin{figure}[h]
\centering
\includegraphics[width=\linewidth]{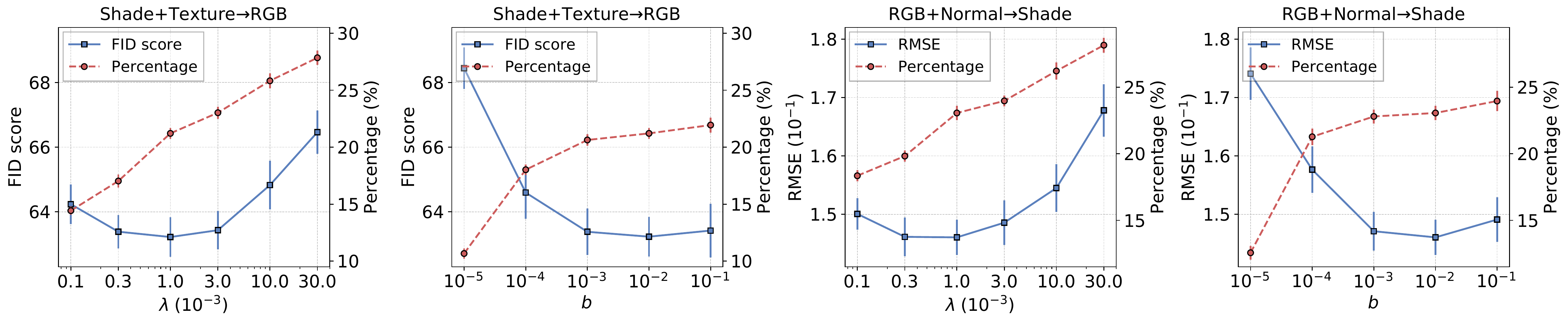}
\caption{Sensitivity analysis for $\lambda$ and $\theta$. In our channel exchanging process, $\lambda$ is the weight of sparsity constraint (Equation~\ref{eq:cen}), and $\theta$ is the threshold for choosing close-to-zero scaling factors (Equation~\ref{eq:exchange-bn}). We conduct five experiments for each parameter setting. In the 1st and 3rd sub-figures, $\lambda$ ranges from $0.1\times10^{-3}$ to $30.0\times10^{-3}$, and $\theta$ is set to $10^{-2}$. In the 2nd and 4th sub-figures, $\theta$ ranges from $10^{-5}$ to $10^{-1}$, and $\lambda$ is set to $10^{-3}$. The task name is shown at the top of each sub-figure. The left y-axis indicates the metric, and the right y-axis indicates the proportion of channels that are lower than the threshold $\theta$, \emph{}{i.e.} the proportion of channels that will be replaced. We observe that both hyper-parameters are not sensitive around their default settings ($\lambda=10^{-3}$ and $\theta=10^{-2}$).}
\label{sec:params_sensitivity}
\end{figure}

\end{document}